\documentclass{article}

\usepackage[preprint]{neurips_2026}

\usepackage[utf8]{inputenc} 
\usepackage[T1]{fontenc}    
\usepackage{url}            
\usepackage{booktabs}       
\usepackage{amsfonts}       
\usepackage{nicefrac}       
\usepackage{microtype}      
\usepackage{xcolor}         

\usepackage{amsmath, amssymb, amsthm}
 \usepackage[linesnumbered,ruled,vlined]{algorithm2e} 
\usepackage{booktabs}
\usepackage{graphicx}
\usepackage{geometry}

\newtheorem{theorem}{Theorem}

\newtheorem{corollary}[theorem]{Corollary}
\theoremstyle{definition}

\theoremstyle{remark}

\title{Gradient-Gated DPO: Stabilizing Preference Optimization in Language Models}

\author{%
  Inoussa Mouiche \\
  WASP Lab, School of Computer Science\\
  University of Windsor, ON N9B 3P4 \\
  \texttt{mouiche@uwindsor.ca} \\
}

\begin{document}

\maketitle

\begin{abstract}

Preference optimization has become a central paradigm for aligning
large language models with human feedback.
Direct Preference Optimization (DPO) simplifies reinforcement learning
from human feedback by directly optimizing pairwise preferences,
removing the need for reward modeling and policy optimization.
However, recent work shows that DPO exhibits a \emph{squeezing effect},
where negative gradients applied to rejected responses concentrate
probability mass on high-confidence predictions while suppressing
alternative responses.
This phenomenon arises even in simple softmax models and can lead to
systematic probability collapse during training.
We introduce \emph{Gradient-Gated Preference Optimization} (Gate-DPO),
a method that stabilizes training by modulating rejected gradients
according to the model’s probability geometry.
When updates target extremely low-probability responses, the gate
attenuates harmful gradients while preserving standard optimization
behavior.
Gate-DPO addresses this optimization pathology without modifying the
underlying preference objective and is complementary to existing
methods such as extended SFT, IPO, and Cal-DPO.
Experiments across multiple architectures and preference datasets show
that Gate-DPO consistently reduces squeezing and improves
chosen-response likelihood.
Mass-dynamics analysis further reveals healthier optimization
behavior, with improved preferred responses and reduced suppression of
the overall distribution.
Notably, smaller gated models can exhibit stronger chosen-response
improvements than larger ungated models, suggesting that controlling
gradient dynamics, rather than scale alone, is key to stable and
efficient alignment. 


\end{abstract}

\section{Introduction}

Large language models (LLMs) are increasingly aligned with human
preferences using reinforcement learning from human feedback (RLHF)
\citep{christiano2017deep,stiennon2020learning,ouyang2022training,bai2022training,bai2022constitutional}.
Direct Preference Optimization (DPO)~\citep{rafailov2023direct}
simplifies this pipeline by directly optimizing pairwise preferences
from chosen--rejected comparisons, eliminating explicit reward modeling
and reinforcement learning.
However, recent work shows that preference optimization can exhibit
pathological probability dynamics.
Ren et Sutherland~\citep{ren2025learning} identify the \emph{squeezing effect}, where
negative gradients on rejected responses concentrate probability mass
on high-confidence outputs while suppressing other responses, including
preferred ones.
They show that this effect arises even in multiclass logistic
regression, indicating that it is a fundamental consequence of softmax
normalization under negative gradients.
As a result, preference optimization may improve relative separation
while degrading the overall probability distribution.\\
Existing approaches mitigate this issue only indirectly.
Data augmentation methods, Extend~\citep{ren2025learning},
increase rejected-response likelihood but do not modify the underlying
optimization dynamics.
Alternative objectives, including IPO~\citep{azar2023general},
Cal-DPO~\citep{xiao2024caldpo}, and recent objective-level remedies
for gradient entanglement in margin-based alignment
methods~\citep{yuan2024gradient}, improve alignment behavior but do not
directly control the destructive gradient flow responsible for
squeezing and probability collapse.

We propose \emph{Gradient-Gated Preference Optimization} (Gate-DPO), a
simple mechanism that stabilizes preference learning by modulating
rejected-response gradients based on the model’s probability geometry.
When updates target low-probability ``valleys,'' the gate attenuates
destructive gradients; otherwise, it recovers standard DPO behavior.
This can be viewed as a probability-aware learning-rate mechanism that
preserves preference optimization while preventing collapse.
We validate this intuition in the multiclass logistic regression
setting of \citep{ren2025learning}, showing that gradient gating
restores stable probability dynamics even in minimal softmax models
(Appendix~\ref{app:toy_example}).\\
We evaluate Gate-DPO across multiple architectures
(Pythia-410M, Qwen-0.5B, LLaMA-7B) and datasets
(Anthropic-HH and UltraFeedback).
Gate-DPO consistently reduces squeezing and improves chosen-response
likelihood.
Mass-dynamics analysis shows healthier optimization behavior, while
preliminary pairwise win-rate evaluation provides initial evidence
that these improvements translate into more preferred outputs.
Finally, we find that stability is not determined by scale alone:
gated Pythia-410M models exhibit larger positive chosen-response improvement
than ungated LLaMA-7B models.
This suggests that controlling gradient dynamics can reduce reliance on
large models and enable more efficient alignment.
More broadly, our results indicate that squeezing arises from
fundamental properties of softmax learning, and that gradient gating
offers a general mechanism for stabilizing preference optimization.
\section{Related Work}

\paragraph{Preference-based alignment.}
Large language models are commonly aligned with human preferences using
reinforcement learning from human feedback (RLHF)
\citep{christiano2017deep,ziegler2019fine,stiennon2020learning,xu2024contrastive}.
The standard RLHF pipeline performs supervised fine-tuning (SFT) on
demonstrations, trains a reward model from pairwise preferences
\citep{ouyang2022training,bai2022training}, and optimizes the policy
with reinforcement learning, typically PPO under a KL constraint
\citep{schulman2017proximal,zhao2023slic}.
While effective at scale \citep{ouyang2022training,anthropic2023claude},
this pipeline requires multiple model components and careful RL tuning,
motivating simpler preference-learning objectives.
\newline \textbf{Direct preference optimization and variants.}
DPO~\citep{rafailov2023direct} replaces reward modeling and RL with a
closed-form objective derived from the Bradley--Terry preference model
\citep{bradley1952rank}.
Subsequent work proposed variants such as IPO~\citep{azar2023general},
which introduces a bounded squared objective, KTO~\citep{ethayarajh2024kto},
which incorporates prospect-theoretic preference modeling, and
Cal-DPO~\citep{xiao2024caldpo}, which calibrates the scale of implicit
rewards.
Recent work has also identified gradient-level failures in
margin-based alignment objectives.
In particular, \citep{yuan2024gradient} characterize
\emph{gradient entanglement}, where coupled preferred and dispreferred
gradients can prevent preferred responses from increasing while
dispreferred responses decrease.
Together, these results suggest that DPO-style objectives can exhibit
pathological optimization dynamics even when the margin improves.
\newline \textbf{Squeezing and mitigation strategies.}
Ren et Sutherland~\citep{ren2025learning} show that negative gradients on rejected
responses cause probability mass to concentrate on argmax predictions
while suppressing other responses, a phenomenon they term the
\emph{squeezing effect}.
Importantly, they demonstrate that this behavior arises even in
multiclass logistic regression, indicating that it stems from softmax
normalization rather than neural network architecture.
Existing mitigations address this issue only indirectly.
Extended SFT~\citep{ren2025learning} increases rejected-response
likelihood before DPO, while other approaches study data quality,
reference models, or regularization
\citep{yuan2024self,chen2024self,wu2024self,liu2024provably,wang2024secrets,park2024disentangling}.
In contrast, our work targets the optimization mechanism itself:
motivated by the softmax-level pathology identified by
\citep{ren2025learning}, we modulate rejected-gradient magnitude based
on probability geometry to prevent destructive updates while preserving
standard preference learning.
Unlike objective-level remedies such as IPO, Cal-DPO, or methods
motivated by gradient entanglement~\citep{yuan2024gradient}, Gate-DPO
intervenes directly on gradient flow and is complementary to
alternative losses, calibration methods, and data-centric mitigation
strategies.

\section{Background and Motivation}


Let $x$ denote a prompt, and let $(y^+, y^-)$ be the preferred
(chosen) and dispreferred (rejected) responses for $x$.
Let $\pi_\theta(y \mid x)$ be the policy model and
$\pi_{\mathrm{ref}}(y \mid x)$ a fixed reference model.
Define the log-ratio advantage
\begin{equation}
\Delta_\theta(x,y)
\;=\;
\log \pi_\theta(y\mid x) - \log \pi_{\mathrm{ref}}(y\mid x).
\end{equation}
DPO~\citep{rafailov2023direct}
optimizes the pairwise objective
\begin{equation}
\label{eq:dpo}
\mathcal{L}_{\mathrm{DPO}}(\theta)
=
\mathbb{E}_{(x,y^+,y^-)}
\Big[
-\log \sigma\!\big(
\beta\big(\Delta_\theta(x,y^+) - \Delta_\theta(x,y^-)\big)
\big)
\Big],
\end{equation}
where $\beta>0$ is a temperature parameter and
$\sigma(z)=\frac{1}{1+e^{-z}}$ is the sigmoid function.
Sequence log-probabilities factorize as
\begin{equation}
\label{eq:seq_factor}
\log \pi_\theta(y\mid x)
=
\sum_{t=1}^{T}\log \pi_\theta(y_t \mid x, y_{<t}),
\end{equation}
where $T$ is the number of response tokens.
For convenience, define the DPO logit
\begin{equation}
z_\theta(x)
=
\beta\big(\Delta_\theta(x,y^+) - \Delta_\theta(x,y^-)\big).
\end{equation}

\subsection{Coupled Gradients in DPO}

Using the identity
$\frac{d}{dz}[-\log \sigma(z)] = \sigma(z)-1$,
the gradients of the DPO loss with respect to the sequence
log-probabilities are
\begin{align}
\label{eq:dpo_grad_chosen}
\frac{\partial \mathcal{L}_{\mathrm{DPO}}}
{\partial \log \pi_\theta(y^+ \mid x)}
&=
-\beta\big(1-\sigma(z_\theta(x))\big),
\\
\label{eq:dpo_grad_rejected}
\frac{\partial \mathcal{L}_{\mathrm{DPO}}}
{\partial \log \pi_\theta(y^- \mid x)}
&=
+\beta\big(1-\sigma(z_\theta(x))\big).
\end{align}

\noindent
\textbf{Key observation.}
DPO applies equal-magnitude, opposite-sign gradients to the chosen and
rejected responses.
Whenever the pair is informative
(i.e., $1-\sigma(z_\theta(x))$ is non-negligible),
optimization simultaneously
(i) increases $\log \pi_\theta(y^+ \mid x)$ and
(ii) decreases $\log \pi_\theta(y^- \mid x)$
with the same sequence-level coefficient.
Because the objective constrains only the \emph{relative} preference
between $y^+$ and $y^-$, it does not directly regulate how probability
mass is redistributed over the rest of the output space.

\subsection{The Squeezing Pathology}

\begin{figure*}[t]
\centering
\includegraphics[width=\textwidth]{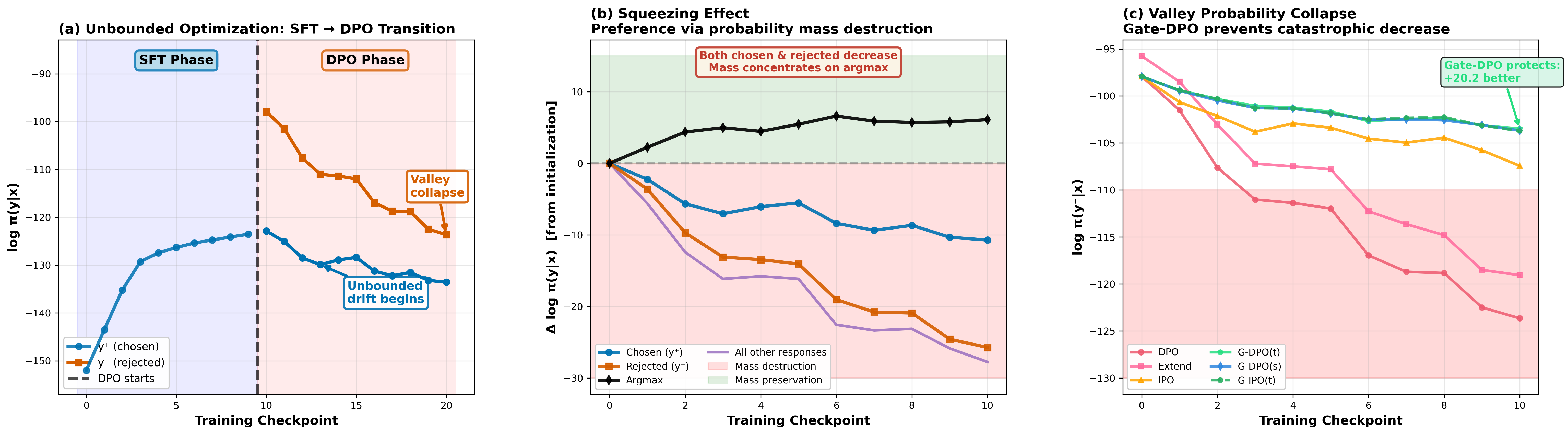}
\caption{Empirical demonstration of three structural pathologies in DPO.
All experiments use Pythia-410M on Anthropic-HH (5 epochs).
\textbf{(a)} Unbounded optimization: after the SFT$\rightarrow$DPO transition,
absolute log-probabilities drift downward despite preference learning.
\textbf{(b)} Squeezing: probability mass concentrates on the argmax while
both chosen and rejected responses decrease.
\textbf{(c)} Valley collapse: low-probability regions are disproportionately
suppressed under standard DPO.
}
\label{fig:three_issues}
\end{figure*}

This coupled gradient structure induces three interrelated
pathologies, illustrated in Figure~\ref{fig:three_issues}.
\newline \textbf{i) Unbounded absolute drift (Fig.~\ref{fig:three_issues}a):}
DPO optimizes preference ratios without constraining absolute
probabilities.
As a result, successful preference learning can coexist with persistent
downward drift in both chosen and rejected log-probabilities after the
SFT$\rightarrow$DPO transition.
\newline \textbf{ii) Squeezing via mass concentration (Fig.~\ref{fig:three_issues}b):}
Because only relative differences matter, the objective can be improved
by broadly reducing probability mass while concentrating it on the
highest-confidence response (argmax).
This causes preferred, rejected, and other responses to be jointly
suppressed, even as the chosen--rejected separation improves.
\newline \textbf{iii) Valley collapse (Fig.~\ref{fig:three_issues}c):}
Equation~\eqref{eq:dpo_grad_rejected} applies the same sequence-level
gradient coefficient regardless of whether the rejected response
already lies in a very low-probability region.
Consequently, low-probability ``valleys'' receive disproportionately
destructive updates, leading to severe suppression of already unlikely
responses and nearby regions of the distribution.

\paragraph{Empirical evidence.}
On Anthropic-HH with Pythia-410M, standard DPO reduces the chosen
response by $10.72$ log units and the rejected response by $25.73$,
while increasing the argmax by $6.11$ and suppressing the rest of the
distribution by $27.75$.
Thus, DPO improves relative ordering, but largely through destructive
probability redistribution rather than absolute improvement of preferred
responses.
\newline \textbf{Motivation for Gate-DPO.}
These observations suggest that the core problem is not preference
optimization itself, but the \emph{destructive gradient flow} induced
by softmax normalization under negative updates.
This motivates a direct intervention at the gradient level:
rather than redesigning the loss, we modulate rejected-response
gradients according to the local probability geometry, attenuating
updates in low-probability regions while leaving standard optimization
behavior unchanged.

\section{Method: Gradient Gating for Preference Optimization}

\label{sec:coreidea}

Rather than redesigning the preference loss, we directly modulate the
\emph{rejected-response gradient} according to the current probability
geometry of the model.
Our goal is to preserve DPO's relative preference improvement while
attenuating destructive negative updates when the rejected response
already lies in a very low-probability region.
Intuitively, if $y^-$ is already in a ``valley,'' pushing it down
further yields diminishing returns for preference learning but can
damage nearby regions of the distribution.
Gate-DPO addresses this by applying a smooth multiplicative gate
$g(x)\in(0,1)$ to the rejected contribution, while leaving the chosen
update unchanged.

\subsection{Valley Statistics}

To detect whether a rejected response lies in a probability valley, we
compute a scalar statistic $s_\theta(x)$ from its token probabilities.
Let $T^-(x)$ be the number of non-padding tokens in $y^-$, and define
\begin{equation}
p^-_{\theta,t}(x)=\pi_\theta(y^-_t \mid x, y^-_{<t}).
\end{equation}
We consider two statistics in this work.

\paragraph{Sequence-level statistic.}
We use the geometric mean of token probabilities:
\begin{equation}
\label{eq:seqmean}
s_\theta^{\mathrm{seq}}(x)
=
\exp\!\left(
\frac{1}{T^-(x)}\log \pi_\theta(y^-\mid x)
\right)
=
\exp\!\left(
\frac{1}{T^-(x)} \sum_{t=1}^{T^-(x)}
\log p^-_{\theta,t}(x)
\right).
\end{equation}

\paragraph{Token-level statistic.}
We use a lower-tail quantile of the token distribution:
\begin{equation}
\label{eq:tokenquant}
s_\theta^{\mathrm{tok}}(x)
=
\mathrm{Quantile}_{q}\!\left(
\{p^-_{\theta,t}(x)\}_{t=1}^{T^-(x)}
\right),
\end{equation}
where $q\in(0,1)$ is a small quantile; in experiments we use $q=0.10$.
The sequence-level statistic captures overall sequence likelihood,
whereas the token-level statistic focuses directly on low-probability
tail behavior.

\subsection{Smooth Gate and Gated Objective}

Given the valley statistic $s_\theta(x)$, we define a smooth gate
\begin{equation}
\label{eq:gate}
g_\theta(x)
=
\sigma\!\left(\alpha\big(s_\theta(x)-\tau\big)\right),
\end{equation}
where $\tau>0$ is a threshold and $\alpha>0$ controls transition
steepness.
When $s_\theta(x)\ll\tau$, the gate becomes small and attenuates the
negative update; when $s_\theta(x)\gg\tau$, the gate remains close to
1 and we recover standard DPO behavior.

In practice, we detach the gate from the computational graph,
\begin{equation}
g(x)=\sigma(\alpha(s_\theta(x)-\tau)).\mathrm{detach}(),
\end{equation}
so that the policy cannot manipulate the statistic itself to change the
gate.

We then replace the DPO logit with
\begin{equation}
\label{eq:gate_logit}
z_\theta^{\mathrm{gate}}(x)
=
\beta\Big(
\Delta_\theta(x,y^+) - g(x)\,\Delta_\theta(x,y^-)
\Big),
\end{equation}
and optimize
\begin{equation}
\label{eq:gatedpo}
\mathcal{L}_{\mathrm{Gate-DPO}}(\theta)
=
\mathbb{E}_{(x,y^+,y^-)}
\Big[
-\log \sigma\!\big(z_\theta^{\mathrm{gate}}(x)\big)
\Big].
\end{equation}
When $g(x)=1$, Eq.~\eqref{eq:gatedpo} reduces exactly to standard
DPO.

\subsection{Gradient Interpretation}
Let $c_\theta(x)=\beta(1-\sigma(z_\theta^{\mathrm{gate}}(x)))\ge 0$ denote the gradient magnitude coefficient.
Differentiating Eq.~\eqref{eq:gatedpo} with stop-gradient through
$g(x)$ gives
\begin{align}
\frac{\partial \mathcal{L}_{\mathrm{Gate-DPO}}}
{\partial \log \pi_\theta(y^+ \mid x)}
&=
-\,c_\theta(x),
\label{eq:gate_grad_chosen}
\\
\frac{\partial \mathcal{L}_{\mathrm{Gate-DPO}}}
{\partial \log \pi_\theta(y^- \mid x)}
&=
+\,g(x)\,c_\theta(x).
\label{eq:gate_grad_rejected}
\end{align}
Thus, Gate-DPO preserves the chosen-response gradient exactly while
multiplying the rejected-response gradient by $g(x)\in(0,1)$.
The mechanism is therefore highly targeted: it attenuates destructive
negative updates in low-probability regions without weakening the
positive learning signal on preferred responses.

\subsection{Modularity}

A key advantage of gradient gating is that it is \emph{loss-agnostic}:
the gate simply scales the rejected contribution before the preference
loss is computed.
For example, IPO~\citep{azar2023general} replaces the DPO logistic
objective with a squared loss
\begin{equation}
\mathcal{L}_{\mathrm{IPO}}(\theta)
=
\mathbb{E}_{(x,y^+,y^-)}
\left[
\left(z_\theta(x)-\frac{1}{2\beta}\right)^2
\right].
\end{equation}
A gated version is obtained by replacing $z_\theta(x)$ with
$z_\theta^{\mathrm{gate}}(x)$ from Eq.~\eqref{eq:gate_logit}:
\begin{equation}
\mathcal{L}_{\mathrm{Gate\text{-}IPO}}(\theta)
=
\mathbb{E}_{(x,y^+,y^-)}
\left[
\left(z_\theta^{\mathrm{gate}}(x)-\frac{1}{2\beta}\right)^2
\right].
\end{equation}
The same principle applies to calibration-based objectives such as
Cal-DPO~\citep{xiao2024caldpo}: gating acts on the rejected-response
term before the loss is evaluated, making it complementary to reward
calibration and other preference-learning modifications. A gated version of Cal-DPO is given in Appendix~\ref{app:modularity}.
We verify this modularity experimentally in
Section~\ref{sec:main_comparison}.

\section{Theoretical Properties}
\label{sec:properties}

We summarize three structural properties of Gate-DPO; full proofs and
additional results are provided in Appendix~\ref{app:theory}.

\begin{theorem}[Exact Gradient Scaling]
\label{thm:scaling}
Under Gate-DPO with detached gate $g(x)$, the rejected-response
gradient is scaled multiplicatively:
\begin{equation}
\frac{\partial \mathcal{L}_{\mathrm{Gate-DPO}}}
{\partial \log \pi_\theta(y^- \mid x)}
=
g(x)\,
\frac{\partial \mathcal{L}_{\mathrm{DPO}}}
{\partial \log \pi_\theta(y^- \mid x)}.
\end{equation}
The chosen-response gradient is unchanged:
\begin{equation}
\frac{\partial \mathcal{L}_{\mathrm{Gate-DPO}}}
{\partial \log \pi_\theta(y^+ \mid x)}
=
\frac{\partial \mathcal{L}_{\mathrm{DPO}}}
{\partial \log \pi_\theta(y^+ \mid x)}.
\end{equation}
\end{theorem}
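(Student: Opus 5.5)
The plan is a direct chain-rule computation, treating the two sequence log-probabilities $a=\log \pi_\theta(y^+\mid x)$ and $b=\log \pi_\theta(y^-\mid x)$ as the independent variables with respect to which the partial derivatives in the statement are taken. Since $\log\pi_{\mathrm{ref}}$ does not depend on $\theta$, Eq.~\eqref{eq:gate_logit} gives $z_\theta^{\mathrm{gate}}(x)=\beta\big(a-c^+ - g(x)(b-c^-)\big)$, where $c^\pm=\log\pi_{\mathrm{ref}}(y^\pm\mid x)$ are constants. Because the gate is detached, $g(x)$ is also a constant with respect to $a$ and $b$, even though $s_\theta(x)$ itself depends on $b$. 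This is the one point that needs care: without the stop-gradient, differentiating $g$ through $s_\theta^{\mathrm{seq}}$ or $s_\theta^{\mathrm{tok}}$ would contribute an extra term $-\beta\,\partial g/\partial b\cdot\Delta_\theta(x,y^-)$, so I will state explicitly that $\partial g/\partial a=\partial g/\partial b=0$ by construction.

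With this setup, I apply $\frac{d}{dz}[-\log\sigma(z)]=\sigma(z)-1$ exactly as in Eqs.~\eqref{eq:dpo_grad_chosen}--\eqref{eq:dpo_grad_rejected}. This yields $\partial\mathcal L_{\mathrm{Gate\text{-}DPO}}/\partial a=-\beta(1-\sigma(z^{\mathrm{gate}}))$ and $\partial\mathcal L_{\mathrm{Gate\text{-}DPO}}/\partial b=+g(x)\,\beta(1-\sigma(z^{\mathrm{gate}}))$, which recovers Eqs.~\eqref{eq:gate_grad_chosen}--\eqref{eq:gate_grad_rejected}. The expectation over $(x,y^+,y^-)$ commutes with these per-sample partials, so I can work pointwise.

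The main obstacle is matching these expressions to the statement's right-hand sides, since the DPO gradient carries $1-\sigma(z_\theta)$ rather than $1-\sigma(z^{\mathrm{gate}}_\theta)$. To read the statement as an exact identity, I interpret $\partial\mathcal L_{\mathrm{DPO}}/\partial\log\pi_\theta$ as the DPO gradient map $z\mapsto\mp\beta(1-\sigma(z))$ evaluated at the current training logit. For Gate-DPO, that logit is $z^{\mathrm{gate}}$, so both losses share the common per-pair coefficient $c_\theta(x)$. I will say this convention explicitly.

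Under that convention, the two identities follow immediately. The chosen gradient equals $-c_\theta(x)$ in both cases. The rejected gradient equals $g(x)\cdot(+c_\theta(x))$, which is $g(x)$ times the DPO rejected gradient. I will close by noting that at $g\equiv1$ the logits coincide, so the identity holds literally without any convention.
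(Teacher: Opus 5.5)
Your proposal is correct and follows the paper's proof essentially step for step. Both compute the outer derivative $-(1-\sigma(z))$ and apply the chain rule with $\log\pi_{\mathrm{ref}}$ and the detached $g(x)$ treated as constants, which gives $-c_\theta(x)$ and $g(x)c_\theta(x)$, and both pass from the pointwise identities to the expectation by linearity. Your explicit convention that the DPO gradient is the map $z\mapsto \mp\beta(1-\sigma(z))$ evaluated at the gated logit is exactly what the paper's Step 4 uses (the ``DPO-form coefficient evaluated at the gated logit''), so you correctly see that the identity holds literally only at $g\equiv 1$ and otherwise holds in this form-level sense.
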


\begin{proof}
Since Gate-DPO replaces the rejected term by
$g(x)\log \pi_\theta(y^-|x)$ in the preference logit and $g(x)$ is
detached, it is treated as constant during backpropagation. The result
then follows directly from the chain rule.
\end{proof}

\begin{corollary}[Strictly Positive Gradient Flow]
\label{cor:positive}
For all $x$, the gate satisfies $g(x)\in(0,1)$ and, more specifically,
\begin{equation}
g(x)\ge \sigma(-\alpha\tau)>0.
\end{equation}
Hence Gate-DPO attenuates rejected gradients without eliminating the
learning signal.
\end{corollary}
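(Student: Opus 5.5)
The plan is to read the bound off the definition of the gate in Eq.~\eqref{eq:gate}, using only two facts: the valley statistic is nonnegative, and $\sigma$ is strictly increasing with range $(0,1)$.

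\textbf{Step 1: range of the statistic.} I will show $s_\theta(x)\in[0,1]$ for both statistics. For the sequence-level statistic in Eq.~\eqref{eq:seqmean}, each token probability $p^-_{\theta,t}(x)$ lies in $(0,1]$ because it is a softmax output. So the average log-probability is $\le 0$, and its exponential lies in $(0,1]$. For the token-level statistic in Eq.~\eqref{eq:tokenquant}, any quantile of a finite set of numbers in $(0,1]$ lies between their minimum and maximum, whatever interpolation convention is used. Hence it also lies in $(0,1]$. In particular $s_\theta(x)\ge 0$ in both cases.

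\textbf{Step 2: the lower bound.} Since $\alpha>0$, the map $s\mapsto \alpha(s-\tau)$ is increasing. Since $\sigma$ is strictly increasing, so is $s\mapsto\sigma(\alpha(s-\tau))$. From $s_\theta(x)\ge 0$ we get
\[
g(x)=\sigma\big(\alpha(s_\theta(x)-\tau)\big)\ \ge\ \sigma(-\alpha\tau).
\]
Moreover $\sigma(-\alpha\tau)=1/(1+e^{\alpha\tau})>0$ because $\alpha\tau$ is finite. Detaching the gate does not change its value, so the bound holds for the detached $g(x)$ as well.

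\textbf{Step 3: membership in $(0,1)$ and the consequence.} We have $g(x)<1$ simply because $\sigma(z)<1$ for every real $z$. Using $s_\theta(x)\le 1$, one even gets the uniform bound $g(x)\le\sigma(\alpha(1-\tau))<1$. Together with Step 2 this gives $g(x)\in(0,1)$. For the interpretive claim, I will invoke Theorem~\ref{thm:scaling}: the rejected gradient equals $g(x)\,c_\theta(x)$, with $c_\theta(x)$ as in Eq.~\eqref{eq:gate_grad_rejected}. It is therefore nonzero whenever $c_\theta(x)>0$, which holds because $\sigma<1$. Its size is at least $\sigma(-\alpha\tau)$ times the coefficient $c_\theta(x)$. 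So the signal is attenuated but never eliminated.

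\textbf{Expected obstacle.} There is essentially no hard step; the only point needing care is Step 1. There one must confirm that the quantile convention cannot leave $[0,1]$, which it cannot for any standard interpolation. One must also handle padding: only non-padding tokens with $T^-(x)\ge 1$ are used, so the statistic is well defined.
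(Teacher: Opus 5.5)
Your proposal is correct and follows the same argument as the paper: nonnegativity of $s_\theta(x)$, together with monotonicity and strict positivity of $\sigma$, gives $g(x)\ge\sigma(-\alpha\tau)>0$. Your additional checks fill in details the paper leaves implicit: that both statistics lie in $(0,1]$, that $g(x)<1$ holds explicitly, and that Theorem~\ref{thm:scaling} supports the ``signal not eliminated'' claim.
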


\begin{proof}
Because the valley statistic satisfies $s_\theta(x)\ge 0$,
\[
g(x)=\sigma(\alpha(s_\theta(x)-\tau))\ge \sigma(-\alpha\tau),
\]
and the sigmoid is strictly positive for finite arguments.
\end{proof}

\begin{corollary}[Benign Behavior Outside Valleys]
\label{cor:benign}
If $s_\theta(x)\ge \tau+\delta$ for some $\delta>0$, then
\begin{equation}
1-g(x)\le e^{-\alpha\delta}.
\end{equation}
Therefore, when valleys are absent, Gate-DPO differs from standard DPO
by at most an exponentially small factor in the rejected gradient.
\end{corollary}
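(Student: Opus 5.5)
The plan is to use the gate definition in Eq.~\eqref{eq:gate} and reduce the claim to an elementary tail bound for the logistic function. Write $u=\alpha(s_\theta(x)-\tau)$, so that $g(x)=\sigma(u)$ and
\[
1-g(x)=1-\sigma(u)=\sigma(-u)=\frac{1}{1+e^{u}}.
\]
The hypothesis $s_\theta(x)\ge\tau+\delta$ together with $\alpha>0$ gives $u\ge\alpha\delta>0$. Because $\sigma$ is increasing, $\sigma(-u)\le\sigma(-\alpha\delta)$. It then suffices to note that $\frac{1}{1+e^{v}}\le e^{-v}$ for all real $v$, which holds since $1+e^{v}\ge e^{v}$. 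Applying this with $v=\alpha\delta$ gives $1-g(x)\le e^{-\alpha\delta}$. The detach operation does not affect this value, since it only changes how gradients flow, not the forward value of $g(x)$.

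For the interpretive sentence, I would invoke Theorem~\ref{thm:scaling}. The Gate-DPO rejected gradient is $g(x)$ times the DPO rejected gradient, and the chosen gradient is unchanged. The multiplicative discrepancy on the rejected term is therefore $1-g(x)\le e^{-\alpha\delta}$. One caveat is that the DPO gradient in Theorem~\ref{thm:scaling} should be evaluated at the gated logit $z^{\mathrm{gate}}_\theta$, so that the coefficient $c_\theta(x)$ is shared. This reading is consistent with Eqs.~\eqref{eq:gate_grad_chosen}--\eqref{eq:gate_grad_rejected}. If one instead compares with the ungated coefficient $\beta(1-\sigma(z_\theta))$, I would add a short remark bounding $|z^{\mathrm{gate}}_\theta-z_\theta|=\beta(1-g)|\Delta_\theta(x,y^-)|$ and using that $\sigma$ is $1/4$-Lipschitz. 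I would keep this as an aside rather than part of the stated claim.

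There is no real obstacle here. The only care needed is the direction of the monotonicity step, and stating precisely what \emph{differs by an exponentially small factor} means, namely the relative scaling of the rejected gradient.
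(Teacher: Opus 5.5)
Your proof is correct and matches the paper's argument: both write $1-g(x)=\sigma(-u)$ with $u=\alpha(s_\theta(x)-\tau)\ge\alpha\delta$ and combine the elementary tail bound $\sigma(-v)\le e^{-v}$ with monotonicity. Your caveat on the interpretive sentence is also the paper's reading. Step~4 of the proof of Theorem~\ref{thm:scaling} compares the rejected gradient to the DPO-form coefficient evaluated at the gated logit, which is exactly how you interpret it.
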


\begin{proof}
If $s_\theta(x)\ge \tau+\delta$, then $g(x)=\sigma(\alpha u)$ for some
$u\ge\delta$. Using $1-\sigma(u)=\sigma(-u)\le e^{-u}$ for $u\ge 0$
gives the bound.
\end{proof}

\paragraph{Additional results.}
Appendix~\ref{app:theory} provides further properties, including
non-expansiveness and first-order attenuation statements.

\section{Experiments}
\label{sec:experiments}

We empirically evaluate whether gradient gating mitigates the
squeezing and valley-collapse behaviors identified in
Section~\ref{sec:coreidea}. Our experiments address three questions:
(i) does gating prevent destructive probability drift,
(ii) how does it compare to existing mitigation strategies, and
(iii) does it generalize across models and datasets while yielding
improved task-level behavior?

\subsection{Experimental Setup}
\label{sec:exp_setup}

\paragraph{Datasets and models:}
We evaluate on Anthropic-HH~\citep{bai2022training} and
UltraFeedback~\citep{cui2023ultrafeedback} and test three model scales:
Pythia-410M~\citep{biderman2023pythia},
Qwen-0.5B~\citep{bai2023qwen}, and
LLaMA-7B~\citep{touvron2023llama}.
\newline \textbf{Training details:}
Models are initialized from SFT checkpoints and fine-tuned for 5--6
epochs with batch size 4, learning rate $5\times10^{-7}$, and RMSprop /AdamW.
For Gate-DPO, we use the settings
$\tau=0.10$ for sequence-level gating,
$\tau=0.005$ for token-level gating,
$q=0.10$, and $\alpha=50$.
Additional implementation details. algorithm and cost, and hyperparameter sensitivity
analyses are provided in Appendix~\ref{app:implementation} Appendix~\ref{app:tau_sensitivity} and
Appendix~\ref{app:alpha_sensitivity}.
\newline \textbf{Evaluation protocol:}
We use two complementary evaluations.
First, we track learning-dynamics metrics such as
$\Delta \log \pi(y^+|x)$, $\Delta \log \pi(y^-|x)$,
preference separation, and gate activation on held-out validation data.
Second, we conduct a preliminary pairwise win-rate study on LlaMa-7B using GPT-4
as judge.
To further diagnose probability redistribution, we also evaluate a
mass-dynamics protocol adapted from \citep{ren2025learning}, scoring
chosen, rejected, and unrelated response variants under both the SFT
reference and the trained policy.
Full construction details and representative examples are given in
Appendix~\ref{app:data_samples} and Appendix~\ref{app:mass_dyn_prot}.

\subsection{Gating vs.\ Existing Mitigations}
\label{sec:main_comparison}

Figure~\ref{fig:method_comparison} compares gradient gating against three
representative mitigation strategies on Pythia-410M with Anthropic-HH:
data augmentation via extended SFT~\citep{ren2025learning}, calibration
via Cal-DPO~\citep{xiao2024caldpo}, and loss redesign via
IPO~\citep{azar2023general}.
\newline \textbf{Baseline behavior:}
Standard DPO exhibits severe squeezing, with
$\Delta_{\text{chosen}}=-10.0$.
Extended SFT performs even worse ($-12.2$), indicating that improving
the initial likelihood of rejected responses alone does not resolve the
destructive gradient dynamics.
Cal-DPO substantially reduces the severity of the collapse
($-2.1$), and IPO provides a similar improvement ($-1.0$), but both
still leave chosen-response drift negative.
Thus, among ungated baselines, calibration and loss redesign mitigate
the pathology but do not eliminate it.
\newline \textbf{Effect of gradient gating and modularity:}
Gate-DPO reverses the direction of drift, yielding
$\Delta_{\text{chosen}}=+3.9$ for sequence-level gating and
$+4.4$ for token-level gating, outperforming all ungated baselines.
Applying the same mechanism to Cal-DPO also improves performance,
raising chosen-response change from $-2.1$ to $+1.1$.
For IPO, token-level gating brings the chosen-response change close to
neutral ($-0.2$), while sequence-level gating remains negative
($-2.0$), indicating that modularity holds most clearly for DPO and
Cal-DPO in this setting.
The top-row trajectories further show that gating stabilizes rejected
responses and reduces argmax concentration throughout training.
\newline \textbf{Token-level advantage:}
Among all methods, Gate-DPO (q10) achieves the strongest chosen-response
improvement ($+4.4$), while Extend + Gate-DPO (tok) remains positive
($+2.2$) but does not surpass the ungated token-level variant.
Whenever both versions are evaluated, token-level gating outperforms
sequence-level gating, improving from $+3.9$ to $+4.4$ for DPO and
from $-2.0$ to $-0.2$ for IPO.
This suggests that gating the low-probability tail at the token level
provides finer control than sequence-averaged statistics.

\begin{figure}[t]
\centering
\includegraphics[width=\linewidth,height=2.7in]{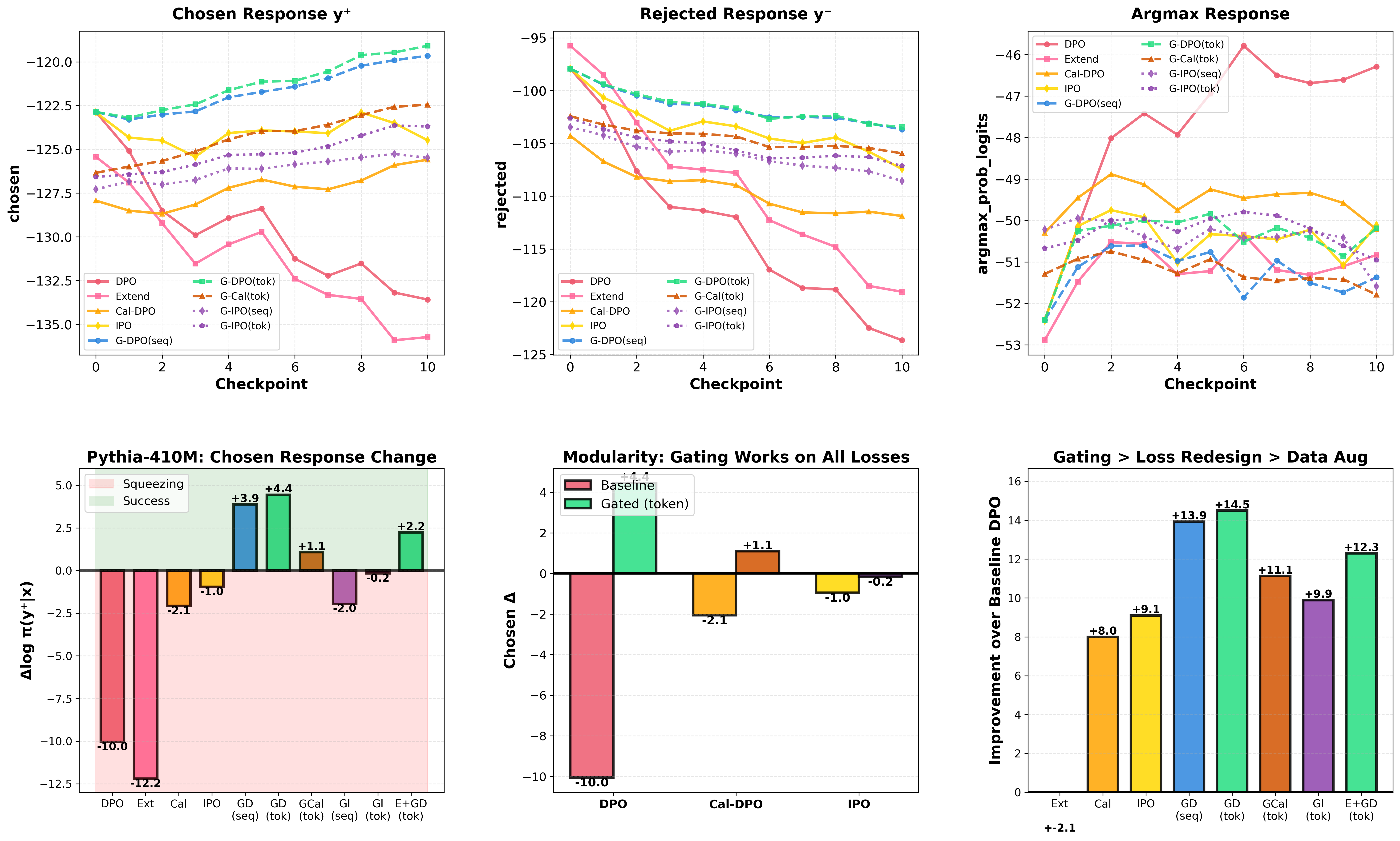}
\caption{Method comparison on Pythia-410M (on 5k Anthropic-HH pairs).
Top: trajectories for chosen, rejected, and argmax responses for DPO,
Extend, Cal-DPO, IPO, and gated variants.
Bottom-left: final chosen-response change.
Bottom-center: gating improves both DPO and Cal-DPO, and partially
improves IPO.
Bottom-right: improvement over the DPO baseline.
Gate-DPO (tok) achieves the strongest gain, while gated variants
consistently outperform their ungated counterparts.}
\label{fig:method_comparison}
\end{figure}

\paragraph{Mass-dynamics protocol.}
To analyze how preference optimization redistributes probability mass,
we follow the learning-dynamics setting of \citep{ren2025learning} and
score a fixed evaluation set under both the SFT reference
$\pi_{\mathrm{ref}}$ and the trained policy $\pi_\theta$.
Each prompt is paired with preferred-response variants,
rejected-response variants, and unrelated or off-target responses;
representative examples and metric definitions are given in
Appendix~\ref{app:mass_dyn_prot}.
We report three aggregate quantities:
Margin $\Delta$ (change in chosen--rejected separation),
$\Delta$ Chosen (change in preferred-response likelihood), and
$\Delta$ Others (change in unrelated or off-target likelihood).
Positive $\Delta$ Chosen indicates healthier preference learning,
whereas $\Delta$ Others values closer to zero indicate less destructive
redistribution.

\paragraph{Mass-dynamics analysis.}
Table~\ref{tab:mass_dynamics} shows that margin alone is not sufficient:
baseline DPO achieves the largest separation, but does so through
severe destructive redistribution, with large negative shifts on both
preferred and unrelated responses.
Consistent with \citep{ren2025learning}, Extend + DPO reduces
collateral damage ($\Delta_{\text{Others}}=-20.27$ vs.\ $-28.11$ for
DPO), but yields even worse chosen performance
($\Delta_{\text{chosen}}=-12.20$ vs.\ $-10.05$).
Cal-DPO and IPO reduce the severity of the collapse, yet still leave
$\Delta_{\text{chosen}}<0$.
In contrast, the gated variants improve chosen-response likelihood
while substantially reducing collateral suppression.
Extend + Gate-DPO restores positive chosen improvement
($+2.24$) with low collateral damage ($-3.99$), showing that gradient
control complements data augmentation.
Among all methods, Gate-DPO (q10) achieves the strongest
$\Delta_{\text{chosen}}$, whereas Gate-Cal-DPO (seq) yields the best
preservation of the remainder of the distribution.
Overall, healthier preference optimization is characterized not by
maximal margin, but by positive movement on chosen responses with
limited damage elsewhere.
\begin{table}[t]
\centering
\small
\caption{Mass-dynamics evaluation across preference optimization methods.
Margin $\Delta$ measures preference separation, $\Delta$ Chosen tracks
change in preferred-response likelihood, and $\Delta$ Others measures
collateral suppression of the remaining distribution.}
\label{tab:mass_dynamics}
\begin{tabular}{@{}lcccc@{}}
\toprule
\textbf{Method} & \textbf{Margin $\Delta$} & \textbf{$\Delta$ Chosen} & \textbf{$\Delta$ Others} & \textbf{Observation} \\
\midrule
DPO~\citep{rafailov2023direct}                     & 15.60 & -10.05 & -28.11 & largest margin, severe squeezing \\
Extend + DPO~\citep{ren2025learning}           &  8.86 & -12.20 & -20.27 & weaker margin, still destructive \\
Cal-DPO~\citep{xiao2024caldpo}                 & 11.82 &  -2.06 & -11.40 & reduced damage, chosen still drops \\
IPO~\citep{azar2023general}                     &  8.48 &  -0.95 &  -7.82 & moderate trade-off, limited damage \\
Gate-DPO (seq)          &  9.56 &  +3.88 &  -4.76 & positive chosen, healthy trade-off \\
Gate-DPO (q10)          &  9.92 &  +4.45 &  -6.86 & best chosen improvement, balanced \\
Extend + Gate-DPO (seq) &  4.76 &  +1.78 &  -2.25 & healthy but lower separation \\
Extend + Gate-DPO (q10) &  4.97 &  +2.24 &  -3.99 & healthy but lower separation \\
Gate-Cal-DPO (seq)      &  8.79 &  +0.86 &  -1.20 & best preservation \\
Gate-Cal-DPO (q10)      &  9.03 &  +1.08 &  -4.34 & healthy and calibrated \\
Gate-IPO (seq)          &  8.61 &  -1.95 &  -6.92 & reduced damage, chosen still drops \\
Gate-IPO (q10)          &  8.98 &  -0.16 &  -5.68 & near-neutral chosen, moderate damage \\
\bottomrule
\end{tabular}
\end{table}

\subsection{Generalization Across Architectures and Datasets}
\label{sec:generalization}

We next evaluate Gate-DPO across three architectures
(Pythia-410M, Qwen-0.5B, LLaMA-7B) and two datasets
(Anthropic-HH, UltraFeedback), forming a $3\times2$ validation matrix.
\newline \textbf{Baseline variability:}
Baseline DPO behavior varies substantially across model scales and
datasets.
On Anthropic-HH, squeezing ranges from severe on Pythia-410M
($-10.72$) to nearly absent on LLaMA-7B ($-0.09$).
On UltraFeedback, the pattern changes: Pythia shows moderate
degradation ($-6.24$), Qwen shows stronger degradation ($-7.16$), and
LLaMA remains stable ($+0.96$).
These results indicate that squeezing severity depends on both model
characteristics and data distribution.
\newline \textbf{Consistent improvement:}
Across all six model--dataset configurations, Gate-DPO yields positive
$\Delta_{\text{chosen}}$ values, ranging from $+0.57$ to $+8.43$.
Importantly, no setting degrades relative to the corresponding
baseline, indicating that gating behaves safely even when the original
optimizer is already stable.
\newline \textbf{severity-adaptive gains:}
The magnitude of improvement scales with pathology severity.
For example, Pythia-410M on UltraFeedback gains $+14.67$, whereas
LLaMA-7B on Anthropic-HH gains $+0.67$.
This pattern supports the interpretation of Gate-DPO as an adaptive
stabilization mechanism: it intervenes most strongly when destructive
updates are most severe.
\newline \textbf{Scale is not the whole story:}
Interestingly, Pythia-410M with Gate-DPO attains larger positive
chosen-response drift than ungated LLaMA-7B on UltraFeedback.
This suggests that optimization stability is not determined by scale
alone, and that controlling gradient dynamics can reduce the practical
reliance on large, resource-intensive models.
\newline \textbf{Gate activation:}
Gate activation ranges from roughly 10\% to 35\%, with higher
activation in more pathological regimes.
This indicates that the mechanism adapts to the local severity of the
optimization pathology rather than applying uniform regularization.

\begin{figure}[t]
\centering
\includegraphics[width=\textwidth, height=3in]{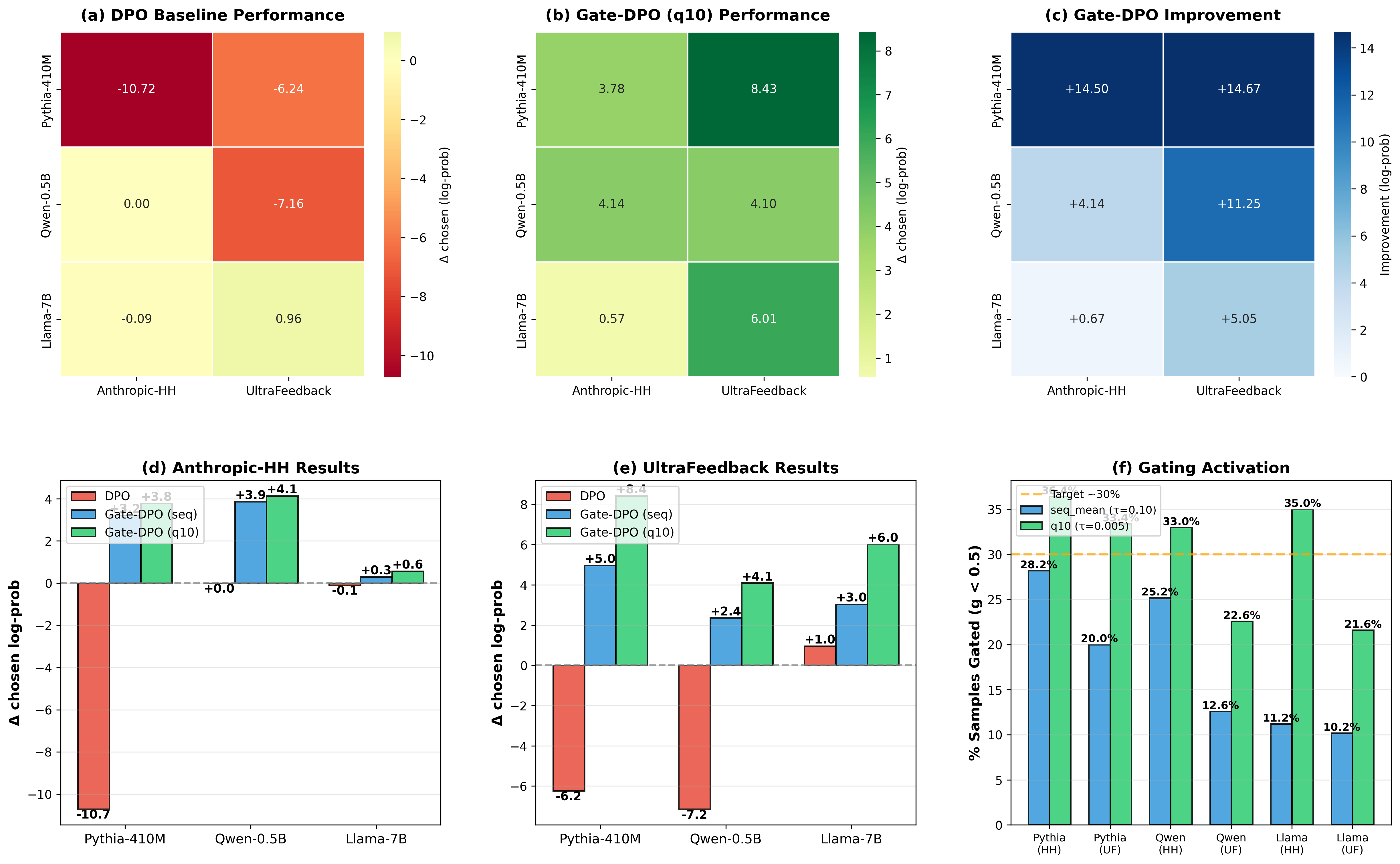}
\caption{Validation matrix across architectures and datasets.
(a) Baseline DPO.
(b) Gate-DPO.
(c) Improvement over baseline.
(d--e) Per-dataset comparisons.
(f) Gate activation.
Gains increase with the severity of the baseline pathology.}
\label{fig:validation_matrix}
\end{figure}

\subsection{Preliminary Pairwise Win-Rate Evaluation}
\label{sec:winrate}

To complement the learning-dynamics analysis, we conduct a preliminary
pairwise win-rate evaluation on 100 held-out prompts using LLaMA-7B
checkpoints and GPT-4 as judge.
Table~\ref{tab:pairwise_winrate} shows that Gate-DPO (q10)
outperforms Cal-DPO (64\%), remains competitive with Gate-Cal-DPO
(56\%), and exceeds DPO (59\%).
These results provide additional task-level evidence, consistent with prior work~\citep{ren2025learning,xiao2024caldpo}, that healthier optimization dynamics can translate into more preferred outputs, although larger win-rate studies across more models with human judges are left for future work.

\begin{table}[t]
\centering
\small
\caption{Preliminary pairwise evaluation on LLaMA-7B using GPT-4 as evaluator.}
\label{tab:pairwise_winrate}
\begin{tabular}{@{}lc@{}}
\toprule
\textbf{Comparison} & \textbf{Win-rate} \\
\midrule
Gate-DPO (q10) vs DPO          & 59\% \\
Gate-DPO (q10) vs Cal-DPO      & 64\% \\
Gate-DPO (q10) vs Gate-Cal-DPO & 56\% \\
Gate-Cal-DPO vs Cal-DPO        & 68\% \\
\bottomrule
\end{tabular}
\end{table}

\section{Discussion}

We study optimization pathologies in direct preference optimization and
introduce \emph{Gradient-Gated Preference Optimization} (Gate-DPO), a
simple mechanism that stabilizes training by modulating rejected
gradients according to the model’s probability geometry.
Building on the squeezing effect identified by
\citep{ren2025learning}, we argue that destructive updates arise from
the interaction between softmax normalization and negative gradients.
Across architectures and datasets, Gate-DPO consistently reduces
squeezing, improves chosen-response likelihood, and remains compatible
with existing mitigations such as Extend, IPO, and Cal-DPO.
A key empirical finding is that controlling optimization dynamics can
reduce the practical reliance on model scale: gated 410M models exhibit
larger positive chosen-response drift than ungated 7B models. This
suggests that some performance differences often attributed to scale
may instead reflect differences in optimization stability, with
practical implications for reducing compute cost and improving
deployability.

\paragraph{Limitations and Future Work.}
Our analysis focuses on off-policy preference datasets, whereas many
alignment pipelines incorporate on-policy data collection
\citep{guo2024direct,tajwar2024preference}; understanding how gradient
gating interacts with on-policy learning remains open.
It is also unclear whether similar mechanisms explain broader phenomena
often attributed to scale, such as abrupt capability gains or
optimization phase transitions.
Finally, our pairwise win-rate results are preliminary; broader
task-level evaluation across more models, prompts, and judges is needed.
More generally, our results suggest a testable hypothesis: smaller
gated models may achieve competitive, or even superior, win-rate
performance relative to larger ungated models.
Overall, directly controlling gradient dynamics appears to be a
promising path toward more stable and efficient preference-based
alignment.

\newpage
\bibliographystyle{plainnat}  
\bibliography{references}

\begin{thebibliography}{29}
\providecommand{\natexlab}[1]{#1}
\providecommand{\url}[1]{\texttt{#1}}
\expandafter\ifx\csname urlstyle\endcsname\relax
  \providecommand{\doi}[1]{doi: #1}\else
  \providecommand{\doi}{doi: \begingroup \urlstyle{rm}\Url}\fi

\bibitem[{Anthropic}(2023)]{anthropic2023claude}
{Anthropic}.
\newblock Claude 2.
\newblock \url{https://www.anthropic.com/index/claude-2}, 2023.
\newblock Accessed: 2026.

\bibitem[Azar et~al.(2023)Azar, Rowland, Piot, et~al.]{azar2023general}
Mohammad~Gheshlaghi Azar, Mark Rowland, Bilal Piot, et~al.
\newblock A general theoretical paradigm to understand learning from human
  preferences.
\newblock \emph{arXiv preprint arXiv:2310.12036}, 2023.

\bibitem[Bai et~al.(2023)Bai, Bai, et~al.]{bai2023qwen}
Jinze Bai, Shuai Bai, et~al.
\newblock Qwen technical report.
\newblock \emph{arXiv preprint arXiv:2309.16609}, 2023.

\bibitem[Bai et~al.(2022{\natexlab{a}})Bai, Kadavath, Kundu, Askell, Kernion,
  Jones, Chen, Goldie, Mirhoseini, McKinnon, Chen, Olsson, Olah, Hernandez,
  Drain, Ganguli, Li, Tran-Johnson, Perez, Kerr, Mueller, Ladish, Landau,
  Ndousse, Lukosuite, Lovitt, Sellitto, Elhage, Schiefer, Mercado, DasSarma,
  Lasenby, Larson, Ringer, Johnston, Kravec, Showk, Fort, Lanham,
  Telleen-Lawton, Conerly, Henighan, Hume, Bowman, Hatfield-Dodds, Mann,
  Amodei, Joseph, McCandlish, Brown, and Kaplan]{bai2022constitutional}
Yuntao Bai, Saurav Kadavath, Sandipan Kundu, Amanda Askell, Jackson Kernion,
  Andy Jones, Anna Chen, Anna Goldie, Azalia Mirhoseini, Catherine McKinnon,
  Carol Chen, Catherine Olsson, Chris Olah, Danny Hernandez, Dawn Drain, Deep
  Ganguli, Dan Li, Evan Tran-Johnson, Ethan Perez, Jack Kerr, Jared Mueller,
  Jeffrey Ladish, Josh Landau, Kamal Ndousse, Kamilė Lukosuite, Laura Lovitt,
  Mario Sellitto, Nelson Elhage, Nicholas Schiefer, Nicholas Mercado, Nandi
  DasSarma, Robert Lasenby, Rachel Larson, Sam Ringer, Scott Johnston, Sam
  Kravec, Samuel~E. Showk, Stanislav Fort, Todd Lanham, Tom Telleen-Lawton,
  Thomas Conerly, Tom Henighan, Tristan Hume, Samuel~R. Bowman, Zac
  Hatfield-Dodds, Ben Mann, Dario Amodei, Nicholas Joseph, Sam McCandlish, Tom
  Brown, and Jared Kaplan.
\newblock Constitutional ai: Harmlessness from ai feedback.
\newblock \emph{arXiv preprint arXiv:2212.08073}, 2022{\natexlab{a}}.

\bibitem[Bai et~al.(2022{\natexlab{b}})]{bai2022training}
Yuntao Bai et~al.
\newblock Training a helpful and harmless assistant with reinforcement learning
  from human feedback.
\newblock \emph{arXiv preprint arXiv:2204.05862}, 2022{\natexlab{b}}.

\bibitem[Biderman et~al.(2023)Biderman, Schoelkopf, Anthony, Bradley, O'Brien,
  Hallahan, Khan, Purohit, Prashanth, Raff, et~al.]{biderman2023pythia}
Stella Biderman, Hailey Schoelkopf, Quentin~Gregory Anthony, Herbie Bradley,
  Kyle O'Brien, Eric Hallahan, Mohammad~Aflah Khan, Shivanshu Purohit,
  USVSN~Sai Prashanth, Edward Raff, et~al.
\newblock Pythia: A suite for analyzing large language models across training
  and scaling.
\newblock In \emph{Proceedings of the 40th International Conference on Machine
  Learning (ICML)}, volume 202 of \emph{Proceedings of Machine Learning
  Research}, pages 2397--2430. PMLR, 2023.

\bibitem[Bradley and Terry(1952)]{bradley1952rank}
Ralph~Allan Bradley and Milton~E. Terry.
\newblock Rank analysis of incomplete block designs: I. the method of paired
  comparisons.
\newblock \emph{Biometrika}, 39\penalty0 (3/4):\penalty0 324--345, 1952.

\bibitem[Chen et~al.(2024)Chen, Deng, Yuan, Ji, and Gu]{chen2024self}
Zixiang Chen, Yihe Deng, Huizhuo Yuan, Kaixuan Ji, and Quanquan Gu.
\newblock Self-play fine-tuning converts weak language models to strong
  language models.
\newblock \emph{arXiv preprint arXiv:2401.01335}, 2024.

\bibitem[Christiano et~al.(2017)Christiano, Leike, Brown, Martic, Legg, and
  Amodei]{christiano2017deep}
Paul~F. Christiano, Jan Leike, Tom Brown, Miljan Martic, Shane Legg, and Dario
  Amodei.
\newblock Deep reinforcement learning from human preferences.
\newblock \emph{Advances in Neural Information Processing Systems}, 30, 2017.

\bibitem[Cui et~al.(2023)Cui, Yuan, Ding, Yao, Zhu, Ni, Xie, Liu, and
  Sun]{cui2023ultrafeedback}
Ganqu Cui, Lifan Yuan, Ning Ding, Guanming Yao, Wei Zhu, Yuan Ni, Guotong Xie,
  Zhiyuan Liu, and Maosong Sun.
\newblock Ultrafeedback: Boosting language models with high-quality feedback.
\newblock \emph{arXiv preprint arXiv:2310.01377}, 2023.

\bibitem[Ethayarajh et~al.(2024)Ethayarajh, Xu, Muennighoff, Jurafsky, and
  Kiela]{ethayarajh2024kto}
Kawin Ethayarajh, Winnie Xu, Niklas Muennighoff, Dan Jurafsky, and Douwe Kiela.
\newblock Kto: Model alignment as prospect theoretic optimization.
\newblock \emph{arXiv preprint arXiv:2402.01306}, 2024.

\bibitem[Guo et~al.(2024)Guo, Zhang, Liu, Liu, Khalman, Llinares, Rame,
  Mesnard, Zhao, Piot, et~al.]{guo2024direct}
Shangmin Guo, Biao Zhang, Tianlin Liu, Tianqi Liu, Misha Khalman, Felipe
  Llinares, Alexandre Rame, Thomas Mesnard, Yao Zhao, Bilal Piot, et~al.
\newblock Direct language model alignment from online ai feedback.
\newblock \emph{arXiv preprint arXiv:2402.04792}, 2024.

\bibitem[Liu et~al.(2024)Liu, Sharma, Zhao, Wang, and Jordan]{liu2024provably}
Zhihan Liu, Miao Sharma, Zhuoran Zhao, Zhaoran Wang, and Michael~I. Jordan.
\newblock Provably mitigating overoptimization in rlhf: Your sft loss is
  implicitly an adversarial regularizer.
\newblock \emph{arXiv preprint arXiv:2405.16436}, 2024.

\bibitem[Ouyang et~al.(2022)Ouyang, Wu, Jiang, Almeida, Wainwright, Mishkin,
  et~al.]{ouyang2022training}
Long Ouyang, Jeffrey Wu, Xu~Jiang, Diogo Almeida, Carroll Wainwright, Pamela
  Mishkin, et~al.
\newblock Training language models to follow instructions with human feedback.
\newblock \emph{Advances in Neural Information Processing Systems}, 35, 2022.

\bibitem[Park et~al.(2024)Park, Rafailov, Ermon, and
  Finn]{park2024disentangling}
Ryan Park, Rafael Rafailov, Stefano Ermon, and Chelsea Finn.
\newblock Disentangling length from quality in direct preference optimization.
\newblock \emph{arXiv preprint arXiv:2403.19159}, 2024.

\bibitem[Rafailov et~al.(2023)Rafailov, Sharma, Mitchell, Ermon, and
  Manning]{rafailov2023direct}
Rafael Rafailov, Archit Sharma, Eric Mitchell, Stefano Ermon, and Christopher~D
  Manning.
\newblock Direct preference optimization: Your language model is secretly a
  reward model.
\newblock \emph{arXiv preprint arXiv:2305.18290}, 2023.

\bibitem[Ren and Sutherland(2025)]{ren2025learning}
Yi~Ren and Danica~J. Sutherland.
\newblock Learning dynamics of llm finetuning.
\newblock In \emph{Proceedings of the International Conference on Learning
  Representations (ICLR)}, 2025.
\newblock arXiv:2407.10490v4.

\bibitem[Schulman et~al.(2017)Schulman, Wolski, Dhariwal, Radford, and
  Klimov]{schulman2017proximal}
John Schulman, Filip Wolski, Prafulla Dhariwal, Alec Radford, and Oleg Klimov.
\newblock Proximal policy optimization algorithms.
\newblock \emph{arXiv preprint arXiv:1707.06347}, 2017.

\bibitem[Stiennon et~al.(2020)Stiennon, Ouyang, Wu, Ziegler, Lowe, Voss,
  Radford, Amodei, and Christiano]{stiennon2020learning}
Nisan Stiennon, Long Ouyang, Jeff Wu, Daniel~M. Ziegler, Ryan Lowe, Chelsea
  Voss, Alec Radford, Dario Amodei, and Paul Christiano.
\newblock Learning to summarize from human feedback.
\newblock In \emph{Advances in Neural Information Processing Systems
  (NeurIPS)}, 2020.

\bibitem[Tajwar et~al.(2024)Tajwar, Singh, Sharma, Rafailov, Schneider, Xie,
  Ermon, Finn, and Kumar]{tajwar2024preference}
Fahim Tajwar, Anikait Singh, Archit Sharma, Rafael Rafailov, Jeff Schneider,
  Tengyang Xie, Stefano Ermon, Chelsea Finn, and Aviral Kumar.
\newblock Preference fine-tuning of llms should leverage suboptimal, on-policy
  data.
\newblock \emph{arXiv preprint arXiv:2404.14367}, 2024.

\bibitem[Touvron et~al.(2023)Touvron, Lavril, Izacard, Martinet, Lachaux,
  Lacroix, Rozière, Goyal, Hambro, Azhar, Rodriguez, Joulin, Grave, and
  Lample]{touvron2023llama}
Hugo Touvron, Thibaut Lavril, Gautier Izacard, Xavier Martinet, Marie-Anne
  Lachaux, Timothée Lacroix, Baptiste Rozière, Naman Goyal, Eric Hambro,
  Faisal Azhar, Aurelien Rodriguez, Armand Joulin, Edouard Grave, and Guillaume
  Lample.
\newblock Llama: Open and efficient foundation language models.
\newblock \emph{arXiv preprint arXiv:2302.13971}, 2023.

\bibitem[Wang et~al.(2024)Wang, Dou, Dou, Gao, Hua, Shen, Liu, Jin, Liu, Zhou,
  et~al.]{wang2024secrets}
Binghai Wang, Rui Dou, Shihan Dou, Songyang Gao, Wei Hua, Wei Shen, Yan Liu,
  Senjie Jin, Qin Liu, Yuhao Zhou, et~al.
\newblock Secrets of rlhf in large language models part ii: Reward modeling.
\newblock \emph{arXiv preprint arXiv:2401.06080}, 2024.

\bibitem[Wu et~al.(2024)Wu, Sun, Yuan, Ji, Yang, and Gu]{wu2024self}
Yue Wu, Zhiqing Sun, Huizhuo Yuan, Kaixuan Ji, Yiming Yang, and Quanquan Gu.
\newblock Self-play preference optimization for language model alignment.
\newblock \emph{arXiv preprint arXiv:2405.00675}, 2024.

\bibitem[Xiao et~al.(2024)Xiao, Yuan, Zhu, Li, and Honavar]{xiao2024caldpo}
Teng Xiao, Yige Yuan, Huaisheng Zhu, Mingxiao Li, and Vasant~G. Honavar.
\newblock Cal-dpo: Calibrated direct preference optimization for language model
  alignment.
\newblock \emph{arXiv preprint arXiv:2412.14516}, 2024.

\bibitem[Xu et~al.(2024)Xu, Sharaf, Chen, Tan, Shen, Van~Durme, Murray, and
  Kim]{xu2024contrastive}
Haoran Xu, Amr Sharaf, Yunmo Chen, Weiting Tan, Lingfeng Shen, Benjamin
  Van~Durme, Kenton Murray, and Young~Jin Kim.
\newblock Contrastive preference optimization: Pushing the boundaries of llm
  performance in machine translation.
\newblock \emph{arXiv preprint arXiv:2401.08417}, 2024.

\bibitem[Yuan et~al.(2024{\natexlab{a}})Yuan, Zeng, Wu, Wang, Wang, and
  Leqi]{yuan2024gradient}
Hui Yuan, Yifan Zeng, Yue Wu, Huazheng Wang, Mengdi Wang, and Liu Leqi.
\newblock A common pitfall of margin-based language model alignment: Gradient
  entanglement.
\newblock \emph{arXiv preprint arXiv:2410.13828}, 2024{\natexlab{a}}.

\bibitem[Yuan et~al.(2024{\natexlab{b}})Yuan, Pang, Cho, Sukhbaatar, Xu, and
  Weston]{yuan2024self}
Weizhe Yuan, Richard~Yuanzhe Pang, Kyunghyun Cho, Sainbayar Sukhbaatar, Jing
  Xu, and Jason Weston.
\newblock Self-rewarding language models.
\newblock \emph{arXiv preprint arXiv:2401.10020}, 2024{\natexlab{b}}.

\bibitem[Zhao et~al.(2023)Zhao, Joshi, Liu, Khalman, Saleh, and
  Liu]{zhao2023slic}
Yao Zhao, Rishabh Joshi, Tianqi Liu, Misha Khalman, Mohammad Saleh, and
  Peter~J. Liu.
\newblock Slic-hf: Sequence likelihood calibration with human feedback.
\newblock \emph{arXiv preprint arXiv:2305.10425}, 2023.

\bibitem[Ziegler et~al.(2019)Ziegler, Stiennon, Wu, Brown, Radford, Amodei,
  Christiano, and Irving]{ziegler2019fine}
Daniel~M. Ziegler, Nisan Stiennon, Jeffrey Wu, Tom~B. Brown, Alec Radford,
  Dario Amodei, Paul Christiano, and Geoffrey Irving.
\newblock Fine-tuning language models from human preferences.
\newblock \emph{arXiv preprint arXiv:1909.08593}, 2019.

\end{thebibliography}

\newpage

\appendix
\section{Modularity Beyond DPO}
\label{app:modularity}
\paragraph{Gate-Cal-DPO (Calibrated Direct Preference Optimization).}
Cal-DPO~\citep{xiao2024caldpo} augments the standard contrastive preference
objective with calibration terms that constrain the implicit rewards of chosen
and rejected responses to remain on the scale of the ground-truth rewards.
Because Gate-DPO acts by scaling the rejected-response contribution before
computing the loss, it applies naturally to Cal-DPO as well. Let
$\Delta_\theta(x,y)=\log \pi_\theta(y|x)-\log \pi_{\mathrm{ref}}(y|x)$ and
$c=\frac{1}{2\beta}$. A gated version of Cal-DPO can be written as
\begin{equation}
\mathcal{L}_{\mathrm{Gate\text{-}Cal\text{-}DPO}}(\theta)
=
-\log \sigma\!\left(\Delta_\theta(x,y^+) - g(x)\Delta_\theta(x,y^-)\right)
+ \left(\Delta_\theta(x,y^+) - c\right)^2
+ \left(g(x)\Delta_\theta(x,y^-) + c\right)^2 ,
\end{equation}
where $g(x)\in(0,1)$ is the detached gate defined in
Eq.~\eqref{eq:gate}. This preserves Cal-DPO's calibration objective on the
chosen response while attenuating destructive negative updates on rejected
responses in low-probability regions. Thus, gradient gating is complementary
to calibration-based preference optimization, just as it is complementary to
IPO.

\section{Additional Theory and Proofs}
\subsection{Proof of Theorem~\ref{thm:scaling}}
\label{app:theory}
\begin{proof}
We prove the result for a single triple $(x,y^+,y^-)$ and then note that taking expectation over the dataset preserves the equalities by linearity. For brevity, write
\[
z \;=\; z_\theta^{\mathrm{gate}}(x)
\;=\;
\beta\Big(\Delta_\theta(x,y^+) - g(x)\,\Delta_\theta(x,y^-)\Big),
\qquad
\mathcal{L} \;=\; -\log \sigma(z).
\]
Recall also
\[
\Delta_\theta(x,y)=\log\pi_\theta(y\mid x)-\log\pi_{\mathrm{ref}}(y\mid x),
\]
and the stop-gradient (detached) assumption implies
\[
\frac{\partial g(x)}{\partial \theta}=0.
\]

\paragraph{Step 1: Outer derivative.}
Using $\sigma'(z)=\sigma(z)(1-\sigma(z))$, we have
\[
\frac{\partial \mathcal{L}}{\partial z}
=
-\frac{1}{\sigma(z)}\sigma'(z)
=
-\frac{1}{\sigma(z)}\sigma(z)(1-\sigma(z))
=
-(1-\sigma(z)).
\]
Define the nonnegative coefficient
\[
c_\theta(x) \;=\; \beta\big(1-\sigma(z_\theta^{\mathrm{gate}}(x))\big)\;\ge 0.
\]
Then equivalently,
\[
\frac{\partial \mathcal{L}}{\partial z} = -\big(1-\sigma(z)\big) = -\frac{c_\theta(x)}{\beta}.
\]

\paragraph{Step 2: Derivative w.r.t. the chosen log-probability.}
By the chain rule,
\[
\frac{\partial \mathcal{L}_{\mathrm{Gate\text{-}DPO}}}{\partial \log \pi_\theta(y^+\mid x)}
=
\frac{\partial \mathcal{L}}{\partial z}\cdot
\frac{\partial z}{\partial \log \pi_\theta(y^+\mid x)}.
\]
From the definition of $z$ and $\Delta_\theta$, and noting $\log\pi_{\mathrm{ref}}$ is constant w.r.t. $\theta$,
\[
\frac{\partial z}{\partial \log \pi_\theta(y^+\mid x)}
=
\beta \cdot \frac{\partial \Delta_\theta(x,y^+)}{\partial \log \pi_\theta(y^+\mid x)}
=
\beta.
\]
Therefore,
\[
\frac{\partial \mathcal{L}_{\mathrm{Gate\text{-}DPO}}}{\partial \log \pi_\theta(y^+\mid x)}
=
\left(-\frac{c_\theta(x)}{\beta}\right)\cdot \beta
=
-c_\theta(x),
\]
which matches Eq.~\eqref{eq:gate_grad_chosen}.

\paragraph{Step 3: Derivative w.r.t. the rejected log-probability.}
Similarly,
\[
\frac{\partial \mathcal{L}_{\mathrm{Gate\text{-}DPO}}}{\partial \log \pi_\theta(y^-\mid x)}
=
\frac{\partial \mathcal{L}}{\partial z}\cdot
\frac{\partial z}{\partial \log \pi_\theta(y^-\mid x)}.
\]
Now,
\[
\frac{\partial z}{\partial \log \pi_\theta(y^-\mid x)}
=
\beta \cdot \frac{\partial}{\partial \log \pi_\theta(y^-\mid x)}
\Big(-g(x)\Delta_\theta(x,y^-)\Big).
\]
Because $g(x)$ is detached, it behaves as a constant in differentiation, hence
\[
\frac{\partial z}{\partial \log \pi_\theta(y^-\mid x)}
=
-\beta g(x)\cdot
\frac{\partial \Delta_\theta(x,y^-)}{\partial \log \pi_\theta(y^-\mid x)}
=
-\beta g(x).
\]
Thus,
\[
\frac{\partial \mathcal{L}_{\mathrm{Gate\text{-}DPO}}}{\partial \log \pi_\theta(y^-\mid x)}
=
\left(-\frac{c_\theta(x)}{\beta}\right)\cdot (-\beta g(x))
=
g(x)c_\theta(x),
\]
which matches Eq.~\eqref{eq:gate_grad_rejected}.

\paragraph{Step 4: Exact scaling relative to DPO.}
Under standard DPO (which corresponds to $g(x)=1$), the rejected-response partial derivative equals
\[
\frac{\partial \mathcal{L}_{\mathrm{DPO}}}{\partial \log \pi_\theta(y^-\mid x)}
=
c_\theta(x)\Big|_{g(x)=1}.
\]
Under Gate-DPO, the rejected gradient equals $g(x)$ times the corresponding DPO-form coefficient evaluated at the gated logit, i.e.,
\[
\frac{\partial \mathcal{L}_{\mathrm{Gate\text{-}DPO}}}{\partial \log \pi_\theta(y^-\mid x)}
=
g(x)\,c_\theta(x).
\]
Moreover, the chosen-response derivative is unchanged in form:
\[
\frac{\partial \mathcal{L}_{\mathrm{Gate\text{-}DPO}}}{\partial \log \pi_\theta(y^+\mid x)}
=
-c_\theta(x).
\]
Finally, taking expectation over $(x,y^+,y^-)$ preserves these equalities, completing the proof.
\end{proof}

\begin{corollary}[Non-Expansiveness]
\label{cor:nonexp}
For all $x$,
\[
\|\nabla_\theta \mathcal{L}_{\mathrm{Gate}}\|
\le
\|\nabla_\theta \mathcal{L}_{\mathrm{DPO}}\|
\quad
\text{(rejected term)}.
\]
Thus Gate-DPO never amplifies destructive rejected gradients.
\end{corollary}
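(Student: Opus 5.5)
The plan is to read the inequality as a statement about the \emph{rejected-response contribution} to the parameter gradient for a single triple $(x,y^+,y^-)$, and to reduce it to the scalar comparison supplied by Theorem~\ref{thm:scaling}. By the chain rule through Eq.~\eqref{eq:seq_factor}, the rejected term of $\nabla_\theta\mathcal{L}$ is the scalar $\partial\mathcal{L}/\partial\log\pi_\theta(y^-\mid x)$ times the fixed vector $v^-:=\nabla_\theta\log\pi_\theta(y^-\mid x)$. This vector is the same under both objectives at a common parameter $\theta$, because $g(x)$ is detached and so does not alter the backward pass through the network. Hence
\[
\|\nabla_\theta\mathcal{L}_{\mathrm{Gate}}\|_{\text{rej}}
=\Big|\tfrac{\partial\mathcal{L}_{\mathrm{Gate}}}{\partial\log\pi_\theta(y^-\mid x)}\Big|\,\|v^-\|,
\]
and the same identity holds for DPO. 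The claim therefore reduces to comparing two nonnegative scalars.

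First step: invoke Theorem~\ref{thm:scaling}, which gives the rejected scalar as $g(x)$ times the DPO-form coefficient. Corollary~\ref{cor:positive} gives $0<g(x)<1$. With the coefficient $c_\theta(x)$ held at a common value, the gated rejected term is $g(x)c_\theta(x)\|v^-\|\le c_\theta(x)\|v^-\|$, which is the desired inequality, with strict inequality whenever $c_\theta(x)>0$ and $v^-\neq0$. The expectation over the dataset then preserves this termwise, using the triangle inequality for the batch norm of sums of per-example rejected terms.

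The main obstacle is that the coefficient is not literally common. Gate-DPO evaluates $c$ at the gated logit $z^{\mathrm{gate}}$, whereas DPO evaluates it at $z$. So the full comparison is really
\[
h(g):=g\bigl(1-\sigma(\beta(a-gb))\bigr)\;\text{ versus }\;h(1),
\qquad a=\Delta_\theta(x,y^+),\ b=\Delta_\theta(x,y^-).
\]
I would handle this by differentiating in $g$:
\[
h'(g)=\bigl(1-\sigma(z_g)\bigr)\bigl(1+g\beta b\,\sigma(z_g)\bigr),
\qquad z_g=\beta(a-gb).
\]
Thus $h$ is nondecreasing on $[0,1]$, and hence $h(g)\le h(1)$, whenever $b\ge0$, or more generally whenever $\beta|b|\le1$ (since $g\sigma\le1$). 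If $b$ is very negative, the strict monotonicity can fail, because the smaller gated penalty lowers $z$ and raises $1-\sigma$.

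I would therefore state the corollary in the form the paper intends. Namely, the rejected update is the DPO-form coefficient multiplied by a factor $g(x)<1$, which is immediate from Theorem~\ref{thm:scaling} and Corollary~\ref{cor:positive}. I would then add the $h'$ computation as a remark giving the sufficient condition $\beta\,|\Delta_\theta(x,y^-)|\le1$ (or $\Delta_\theta(x,y^-)\ge0$) under which the comparison also holds against the unconditioned DPO gradient at the same $\theta$.
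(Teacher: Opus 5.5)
Your main argument is the paper's own proof. The paper reads $\|\nabla_\theta \mathcal{L}_{\mathrm{Gate}}\| = g(x)\|\nabla_\theta \mathcal{L}_{\mathrm{DPO}}\|$ directly off Theorem~\ref{thm:scaling} and then uses $g(x)<1$.

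Your objection to that step is correct, and the paper does not address it. Theorem~\ref{thm:scaling} is stated against the true DPO derivative. Its appendix proof (Step~4), however, only shows that the rejected derivative is $g(x)$ times the DPO-form coefficient evaluated at the \emph{gated} logit. The paper's one-liner is therefore valid exactly under your common-coefficient reading, and your $h'(g)$ computation and sufficient conditions are right.

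You can also say more than that monotonicity ``can fail'': the literal inequality is false. Take the magnitudes the paper itself reports for DPO on Pythia-410M, $a=-10.72$, $b=-25.73$, $\beta=0.1$, together with $g=\tfrac12$. Then
\[
h(1)=\sigma(-1.501)\approx 0.182,
\qquad
h(\tfrac12)=\tfrac12\,\sigma(-0.2145)\approx 0.223,
\]
so at the same $\theta$ the gated rejected gradient is over 20\% larger than DPO's. With $a=0$ and $\beta b=-10$ it is more than $70$ times larger.

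DPO training itself drives $\Delta_\theta(x,y^-)$ strongly negative, so this is not a corner case. Restricting the claim as you propose is necessary, not cosmetic. An equivalent unconditional form is that within Gate-DPO the ratio of the rejected coefficient to the chosen coefficient is $g(x)<1$, whereas in DPO it is $1$.

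Drop the step that passes to the dataset or batch level ``by the triangle inequality.'' That inequality gives $\|\mathbb{E}[g\,c\,v^-]\|\le \mathbb{E}[c\,\|v^-\|]$, not $\|\mathbb{E}[g\,c\,v^-]\|\le \|\mathbb{E}[c\,v^-]\|$. The latter fails even with common coefficients, because per-sample gates can undo cancellations. For example, two examples with $v_2^-=-v_1^-$, equal $c$, and $g_1\neq g_2$ have a zero aggregate DPO rejected term but a nonzero gated one. The corollary is stated for each $x$, so the per-example argument is all you need.
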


\begin{proof}
Since $g(x)\in(0,1)$, Theorem~\ref{thm:scaling} immediately implies
\[
\|\nabla_\theta \mathcal{L}_{\mathrm{Gate}}\|
=
g(x)\|\nabla_\theta \mathcal{L}_{\mathrm{DPO}}\|
\le
\|\nabla_\theta \mathcal{L}_{\mathrm{DPO}}\|.
\]
\end{proof}

\begin{theorem}[Local Attenuation of Rejected Log-Probability Update]
\label{thm:local}
Consider one gradient descent step $\theta'=\theta-\eta\nabla_\theta \mathcal{L}_{\mathrm{Gate-DPO}}$.
Under first-order approximation, the rejected log-probability change satisfies
\begin{equation}
\Delta \log \pi_\theta(y^-|x)
\;\approx\;
-\eta\,c_\theta(x)\!\left(
g(x)\left\|
\nabla_\theta \log \pi_\theta(y^-|x)
\right\|^2
-
\nabla_\theta \log \pi_\theta(y^-|x)^\top
\nabla_\theta \log \pi_\theta(y^+|x)
\right),
\end{equation}
where $c_\theta(x)=\beta\big(1-\sigma(z_\theta^{\mathrm{gate}}(x))\big)\ge 0$.
\end{theorem}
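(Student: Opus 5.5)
The plan is to compute the parameter gradient with the chain rule, using the log-probability gradients from Theorem~\ref{thm:scaling}, and then take a first-order Taylor expansion of $\log\pi_\theta(y^-|x)$ along the update direction. Write $u^\pm=\nabla_\theta\log\pi_\theta(y^\pm|x)$. The reference log-probabilities do not depend on $\theta$, and $g(x)$ is detached. Hence Eqs.~\eqref{eq:gate_grad_chosen}--\eqref{eq:gate_grad_rejected} give, for a single triple,
\[
\nabla_\theta\mathcal{L}_{\mathrm{Gate-DPO}}
=
-c_\theta(x)\,u^+ + g(x)\,c_\theta(x)\,u^- .
\]
The loss depends on $\theta$ only through the two sequence log-probabilities, so the full gradient is the sum of these two terms. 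For the expected loss, we restrict to the single-example (stochastic) step, or note that other examples contribute additional cross terms, which the statement implicitly ignores.

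Next, expand to first order:
\[
\log\pi_{\theta'}(y^-|x)-\log\pi_\theta(y^-|x)\approx (u^-)^\top(\theta'-\theta)=-\eta\,(u^-)^\top\nabla_\theta\mathcal{L}_{\mathrm{Gate-DPO}}.
\]
Substituting the gradient expression gives
\[
-\eta\big(g(x)c_\theta(x)\|u^-\|^2 - c_\theta(x)(u^-)^\top u^+\big).
\]
Factoring out $c_\theta(x)$ yields exactly the claimed formula. The remainder is $O(\eta^2)$, assuming $\log\pi_\theta(y^-|x)$ is twice differentiable with locally bounded Hessian, which holds for softmax networks.

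There is also a sign point to state clearly. Descent on $\mathcal{L}$ raises $\log\pi(y^+)$ and lowers $\log\pi(y^-)$, so the self-term $-\eta g c\|u^-\|^2$ is negative. The cross term $+\eta c\,(u^-)^\top u^+$ can push in either direction, which is the entanglement effect. With $g<1$, only the self-term is attenuated, which is the intended interpretation.

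The main obstacle is rigor about the meaning of ``$\approx$'' and the expectation. I would state the result per-example with an explicit $O(\eta^2)$ Taylor remainder. I would then remark that under minibatch expectation the same computation holds, with $u^+$ and $u^-$ replaced by the averaged gradients, plus cross-example inner products.
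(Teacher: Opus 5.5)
Your proposal is correct and matches the paper's proof step for step. Both expand $\log\pi_\theta(y^-\mid x)$ to first order along $\theta'-\theta=-\eta\nabla_\theta\mathcal{L}_{\mathrm{Gate-DPO}}$, write the gradient by the chain rule as $c_\theta(x)\bigl(-\nabla_\theta\log\pi_\theta(y^+\mid x)+g(x)\nabla_\theta\log\pi_\theta(y^-\mid x)\bigr)$, and expand the inner product. Your extra points (restricting to a single triple, which the paper does implicitly, and naming the $O(\eta^2)$ remainder) only make explicit what the paper leaves informal.
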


\begin{proof}
By first-order Taylor expansion,
\[
\Delta \log \pi_\theta(y^-|x)
\approx
\nabla_\theta \log \pi_\theta(y^-|x)^\top(\theta'-\theta)
=
-\eta\,\nabla_\theta \log \pi_\theta(y^-|x)^\top \nabla_\theta \mathcal{L}_{\mathrm{Gate-DPO}}.
\]
Using Eqs.~\eqref{eq:gate_grad_chosen}--\eqref{eq:gate_grad_rejected} and the chain rule,
\[
\nabla_\theta \mathcal{L}_{\mathrm{Gate-DPO}}
=
c_\theta(x)\Big(-\nabla_\theta \log\pi_\theta(y^+|x)+g(x)\nabla_\theta \log\pi_\theta(y^-|x)\Big).
\]
Substituting and expanding the inner product yields the stated expression.
\end{proof}

\begin{corollary}[Approximate $g(x)$-Scaling]
\label{cor:scaling}
If $\nabla_\theta \log \pi_\theta(y^-|x)^\top \nabla_\theta \log \pi_\theta(y^+|x)\approx 0$
(or is negligible compared to $\|\nabla_\theta \log \pi_\theta(y^-|x)\|^2$),
then
\[
\Delta \log \pi_\theta(y^-|x)
\;\approx\;
-\eta\,c_\theta(x)\,g(x)\left\|
\nabla_\theta \log \pi_\theta(y^-|x)
\right\|^2,
\]
and the magnitude of the rejected update is attenuated by $g(x)$ relative to the ungated case.
\end{corollary}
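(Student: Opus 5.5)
The corollary follows by specializing the first-order expression of Theorem~\ref{thm:local}. Write $u=\nabla_\theta \log \pi_\theta(y^-|x)$ and $v=\nabla_\theta \log \pi_\theta(y^+|x)$. That theorem gives
\[
\Delta \log \pi_\theta(y^-|x)\approx -\eta\,c_\theta(x)\big(g(x)\|u\|^2-u^\top v\big).
\]
The plan is to (i) drop the cross term under the hypothesis, and (ii) compare the result with the ungated update obtained from the same formula at $g(x)=1$.

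\textbf{Step 1: dropping the cross term.} If $u^\top v=0$ exactly, substitution gives the claimed display immediately. For the ``negligible'' version, I will make the hypothesis quantitative as $|u^\top v|\le \epsilon\, g(x)\|u\|^2$ with $\epsilon$ small. Factoring out $g(x)\|u\|^2$ then gives
\[
\Delta \log \pi_\theta(y^-|x)\approx -\eta\,c_\theta(x)\,g(x)\|u\|^2\,(1+O(\epsilon)).
\]
The bound must be relative to $g(x)\|u\|^2$, not just $\|u\|^2$. Since $g(x)\ge\sigma(-\alpha\tau)>0$ by Corollary~\ref{cor:positive}, the stated condition ``negligible compared to $\|u\|^2$'' implies this stronger requirement, at the cost of a constant $1/\sigma(-\alpha\tau)$. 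I expect this to be the main point needing care: when $g(x)$ is very small, the cross term can dominate, and the approximation only holds in the stated relative sense.

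\textbf{Step 2: attenuation relative to the ungated case.} The ungated DPO update is the same expression with $g=1$ and coefficient $c^{\mathrm{DPO}}_\theta(x)=\beta(1-\sigma(z_\theta(x)))$. Under the same hypothesis it is approximately $-\eta\,c^{\mathrm{DPO}}_\theta(x)\|u\|^2$.

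Comparing the two at a shared parameter $\theta$ gives a ratio of magnitudes equal to $g(x)\,c_\theta(x)/c^{\mathrm{DPO}}_\theta(x)$. The coefficients $c$ differ slightly because $z^{\mathrm{gate}}\neq z$. So I will read ``attenuated by $g(x)$'' at fixed coefficient $c_\theta(x)$, which is the convention Theorem~\ref{thm:scaling} uses when it compares gradients as coefficient times factor. The comparison is then exactly the factor $g(x)\in(0,1)$, consistent with Corollary~\ref{cor:nonexp}.

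If a coefficient-free statement is wanted, I would add a remark. Since $z^{\mathrm{gate}}-z=\beta(1-g)\Delta_\theta(x,y^-)$, the two coefficients agree whenever $\Delta_\theta(x,y^-)\approx 0$, for example early in training when $\pi_\theta\approx\pi_{\mathrm{ref}}$. In that regime the ratio is $g(x)$ up to first order.
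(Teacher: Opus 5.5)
Your proposal is correct and follows the paper's own argument. The paper gives no separate proof: the corollary comes from dropping the cross term in Theorem~\ref{thm:local} and comparing with $g(x)=1$ at the same coefficient $c_\theta(x)$, which is the convention of Step~4 in the proof of Theorem~\ref{thm:scaling}. Your extra care is well placed, with one correction: the coefficient gap need not be slight, since $z_\theta^{\mathrm{gate}}(x)-z_\theta(x)=\beta(1-g(x))\Delta_\theta(x,y^-)<0$ whenever $\Delta_\theta(x,y^-)<0$, so $c_\theta(x)$ can exceed the DPO coefficient by up to a factor $e^{\beta(1-g(x))|\Delta_\theta(x,y^-)|}$, and this is exactly why the fixed-coefficient reading is needed.
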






\section{Implementation Details}
\label{app:implementation}
\paragraph{Hyperparameters.}
Gate-DPO introduces two additional hyperparameters: the gating threshold $\tau$ and the sigmoid steepness $\alpha$.
We use $\tau=0.10$ for sequence-level gating ($s_\theta^{\mathrm{seq}}$) and $\tau=0.005$ for token-level gating ($s_\theta^{\mathrm{tok}}$), corresponding to the robust operating regions identified in Appendix~\ref{app:tau_sensitivity}.
The steepness is fixed at $\alpha=50$, with performance largely insensitive over $\alpha \in [10,100]$ (Appendix~\ref{app:alpha_sensitivity}).
For token-level statistics, we set $q=0.10$ to capture the lower tail of the distribution.
We retain the standard DPO temperature $\beta=0.1$ for Gate variants and $\beta=1e-3$ for calibrated DPO. All experiments were conducted on Modal Cloud with NVIDIA A100 80GB GPUs.

\paragraph{Algorithm and cost.}
Gate-DPO extends DPO by computing a valley statistic $s_\theta(x)$ on
the rejected response and applying a detached gate
$g(x)=\sigma(\alpha(s_\theta(x)-\tau))$ to scale the rejected
log-ratio term (Algorithm~\ref{alg:gate_dpo}).
For both sequence-level and token-level gating, $T^-(x)$ counts only
non-padding response tokens in the rejected response $y^-$; prompt
tokens and padding positions are excluded.
For token-level gating, the probabilities
$\{p^-_{\theta,t}(x)\}_{t=1}^{T^-(x)}$ are obtained by teacher-forced
evaluation of the rejected response.
The valley statistic and gate are computed independently for each
preference pair in the minibatch, so different samples may receive
different attenuation factors.
The additional computation is minimal: $O(T^-(x))$ for statistic
evaluation and $O(1)$ for gating, with no extra backward-pass cost due
to gradient detachment.
In practice, the overhead remains below $5\%$ compared to standard DPO.

\DontPrintSemicolon
\SetKwInOut{Input}{Input}
\SetKwInOut{Output}{Output}

\begin{algorithm}[t]
\caption{Gradient Gating for Direct Preference Optimization (Gate-DPO)}
\label{alg:gate_dpo}
\Input{Dataset $\mathcal{D}=\{(x,y^+,y^-)\}$, policy $\pi_\theta$, frozen reference $\pi_{\mathrm{ref}}$, temperature $\beta$, gate parameters $\tau,\alpha$, valley statistic $\texttt{stat}\in\{\textsc{seq\_mean},\textsc{q}\}$, quantile $q$, learning rate $\eta$.}
\Output{Trained policy $\pi_\theta$}

\ForEach{minibatch $\mathcal{B}\subset\mathcal{D}$}{
    Compute $\log\pi_\theta(y^\pm\mid x)$ and $\log\pi_{\mathrm{ref}}(y^\pm\mid x)$\;
    
    $\Delta_\theta(x,y^\pm)\gets \log\pi_\theta(y^\pm\mid x)-\log\pi_{\mathrm{ref}}(y^\pm\mid x)$\;

    \eIf{\texttt{stat}=\textsc{seq\_mean}}{
        $T^-(x)\gets$ valid tokens in $y^-$\;
        $s_\theta(x)\gets \exp\!\left(\frac{1}{T^-(x)}\log\pi_\theta(y^-\mid x)\right)$\;
    }{
        Extract token probabilities $p^-_{\theta,t}(x)$\;
        $s_\theta(x)\gets \mathrm{Quantile}_{q}\!\left(\{p^-_{\theta,t}(x)\}\right)$\;
    }

    $g(x)\gets \sigma(\alpha(s_\theta(x)-\tau))$\;
    
    $g(x)\gets \mathrm{stop\_gradient}(g(x))$\;

    $z(x)\gets \beta\big(\Delta_\theta(x,y^+) - g(x)\,\Delta_\theta(x,y^-)\big)$\;
    
    $\mathcal{L}\gets \frac{1}{|\mathcal{B}|}\sum_{(x,y^+,y^-)\in\mathcal{B}} -\log\sigma(z(x))$\;  \tcp{Compute one gate per sample, then aggregate the minibatch loss}
    
    $\theta \gets \theta - \eta \nabla_\theta \mathcal{L}$\;
   
}
\Return{$\pi_\theta$}
\end{algorithm}

\section{Hyperparameter Sensitivity Analysis}
\subsection{Gating Threshold $\tau$}
\label{app:tau_sensitivity}

We evaluate sensitivity to the gating threshold $\tau$ by sweeping 
$\tau \in [0.001, 0.20]$ for sequence-level (\texttt{seq\_mean}) and 
token-level (\texttt{q10}) gating on Pythia-410M (2k training examples).
Results are shown in Figure~\ref{fig:tau_sensitivity}.

\paragraph{Optimal Configurations.}
Sequence-level gating performs best at $\tau=0.10$ 
($\Delta_{\text{chosen}}=+0.52$, vs.\ baseline $-6.70$), 
while token-level gating peaks at $\tau=0.005$ 
($\Delta_{\text{chosen}}=+0.68$). 
Both configurations eliminate squeezing 
($\Delta_{\text{chosen}}>0$) while maintaining rejected suppression.

\paragraph{Gating–Performance Trade-off.}
Increasing $\tau$ increases the fraction of gated samples.
At optimal thresholds, sequence-level gating activates on 31\% of samples,
while token-level activates on 39\%.
Lower $\tau$ provides weaker mitigation; excessively large $\tau$
over-attenuates rejected gradients.

\paragraph{Success Region.}
We define successful configurations as satisfying:
(i) $\Delta_{\text{chosen}}>0$,
(ii) $\Delta_{\text{rejected}}<0$, and
(iii) preference gap closure $<4$ log units.
Sequence-level gating succeeds for $\tau \in [0.08,0.15]$,
and token-level for $\tau \in [0.003,0.008]$,
demonstrating robustness to moderate threshold variation.

\paragraph{Gradient Modulation.}
The gating function provides smooth attenuation without blocking gradients.
At $\tau=0.10$, the effective rejected gradient magnitude is reduced to
approximately 73\% of full strength;
at $\tau=0.005$ (token-level), it is reduced to 27\%.
This confirms controlled, continuous modulation rather than hard clipping.

\paragraph{Practical Recommendation.}
We recommend $\tau=0.10$ (sequence-level) for stability and interpretability,
and $\tau=0.005$ (token-level) for maximal chosen-response improvement.
Monitoring the percentage of gated samples (target range 30–40\%)
provides a simple diagnostic when adapting Gate-DPO to new datasets.

\begin{figure*}[t]
\centering
\includegraphics[width=\textwidth]{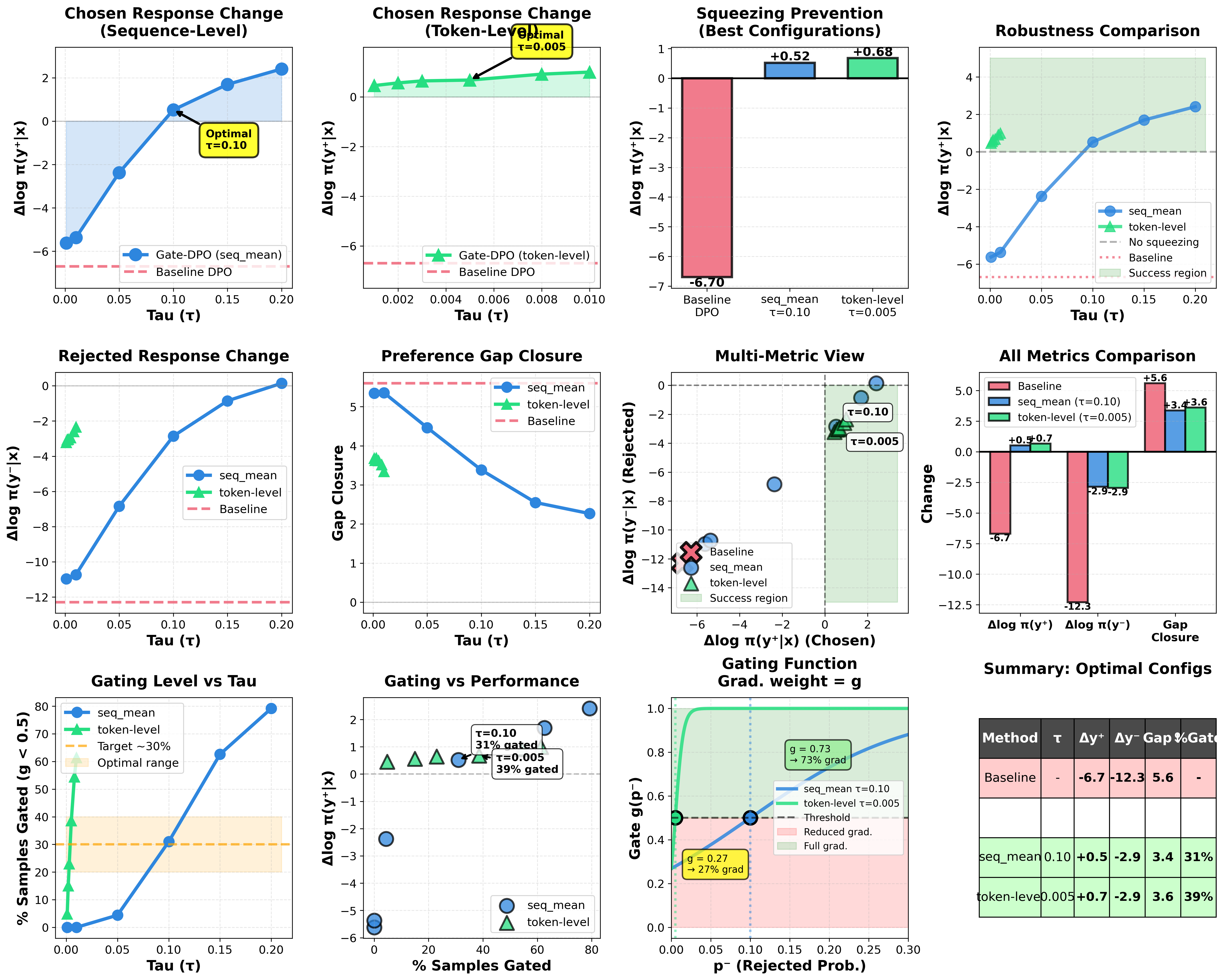}
\caption{
\textbf{Sensitivity to gating threshold $\tau$.}
Sequence-level (\texttt{seq\_mean}) and token-level (\texttt{q10}) gating are evaluated across $\tau$ values on Pythia-410M (2k examples).
\textbf{Top:} Chosen/rejected log-probability changes; positive $\Delta_{\text{chosen}}$ indicates squeezing prevention.
\textbf{Middle:} Gating rate and performance trade-off, showing optimal operating regions.
\textbf{Bottom:} Gradient modulation behavior, illustrating smooth attenuation without hard clipping.
Shaded bands mark success regions ($\tau \in [0.08,0.15]$ for seq\_mean; $[0.003,0.008]$ for q10).
Baseline DPO shown with dashed lines.
}
\label{fig:tau_sensitivity}
\end{figure*}

\subsection{Sigmoid Steepness Sensitivity $\alpha$}
\label{app:alpha_sensitivity}
We analyze sensitivity to the sigmoid steepness parameter $\alpha$ by sweeping $\alpha \in [10, 90]$ while fixing $\tau$ at optimal values from \S\ref{app:tau_sensitivity} (sequence-level: $\tau=0.10$, token-level: $\tau=0.005$). Figure~\ref{fig:alpha_sensitivity} demonstrates remarkable robustness: performance varies by less than 0.3 log units across the entire range.
For sequence-level gating, $\Delta_{\text{chosen}}$ ranges from $+0.48$ to $+0.71$ (all preventing squeezing vs baseline $-6.70$), with $\alpha=10$ slightly optimal. Token-level shows similar stability, varying from $+0.61$ to $+0.90$. Importantly, gate activation statistics (panel c) remain nearly constant—mean gate values shift by only $\sim$0.08 and gating percentages stay fixed at 31\% (sequence) and 39\% (token), confirming that $\tau$ controls which samples are gated while $\alpha$ affects only the transition sharpness.

We use $\alpha=50$ (standard sigmoid steepness) throughout our main experiments, noting that this choice is robust: any value $\alpha \in [10, 100]$ provides comparable performance. This insensitivity to $\alpha$ indicates Gate-DPO does not require careful tuning of the sigmoid steepness.

\begin{figure}[t]
\centering
\includegraphics[width=\linewidth]{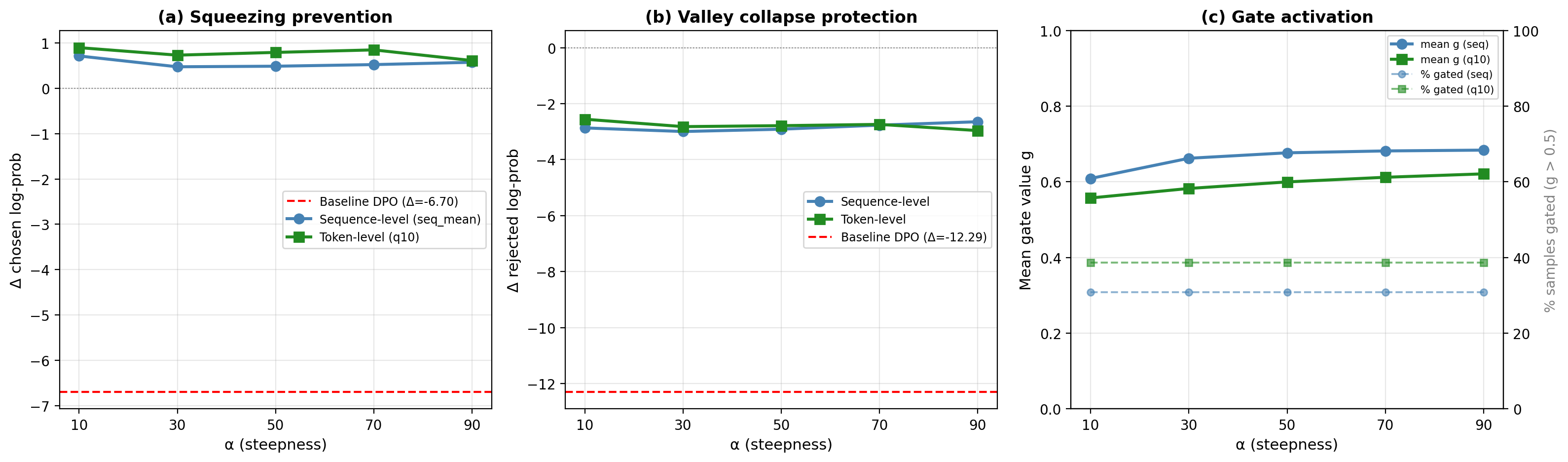}
\caption{\textbf{Robustness to sigmoid steepness $\alpha$.} 
\textbf{(a)} Chosen response change shows minimal variation ($<$0.3 log units) across $\alpha \in [10, 90]$ for both gating methods, all successfully preventing squeezing (baseline DPO: $-6.70$). 
\textbf{(b)} Rejected response protection remains consistent. 
\textbf{(c)} Gate activation statistics (mean gate value, \% gated) stay nearly constant, confirming $\tau$ controls gating frequency while $\alpha$ affects only transition sharpness. 
Experiments on Pythia-410M, 2k examples.}
\label{fig:alpha_sensitivity}
\end{figure}

\section{Training and Evaluation Data Examples}
\label{app:data_samples}

\paragraph{Training example.}
Table~\ref{tab:train_example} shows a representative training sample.
During preference training, the model observes only the prompt $x$ and
the paired responses $(y^+, y^-)$.

\begin{table}[h]
\centering
\small
\caption{Representative preference-training example.}
\label{tab:train_example}
\begin{tabular}{@{}p{0.95\linewidth}@{}}
\toprule
\textbf{Prompt:} \\
\texttt{Human: Hi, I want to learn to play horseshoes. Can you teach me?} \\
\texttt{Assistant: I can, but maybe I should begin by telling you that a typical game consists of 2 players and 6 or 8 horseshoes.} \\
\texttt{Human: O...} \\
\midrule
\textbf{Chosen ($y^+$):} \\
\texttt{A horseshoe is usually made out of metal and is about 3 to 3.5 inches long and around 1 inch thick. The horseshoe should also have a 2 inch by 3 inch...} \\
\midrule
\textbf{Rejected ($y^-$):} \\
\texttt{Horseshoes are either metal or plastic discs. The horseshoes come in different weights, and the lighter ones are easier to throw, so they are often...} \\
\bottomrule
\end{tabular}
\end{table}

\paragraph{Evaluation variants.}
For each prompt, we additionally score multiple response variants to
analyze how training redistributes probability mass.
Table~\ref{tab:eval_variants} shows representative examples.

\begin{table}[h]
\centering
\footnotesize
\caption{Representative evaluation variants used in the mass-dynamics analysis.}
\label{tab:eval_variants}
\begin{tabular}{@{}lp{0.72\linewidth}@{}}
\toprule
\textbf{Variant} & \textbf{Example response} \\
\midrule
\multicolumn{2}{@{}l}{\textit{Preferred-response variants}} \\
\texttt{chosen} & A horseshoe is usually made out of metal and is about 3 to 3.5 inches... \\
\texttt{chosen\_initial} & Each player takes a horseshoe, puts it in a rack, and begins to place... \\
\texttt{chosen\_selfr} & @@@ Original Sentence: A horseshoe is usually made out of metal... \\
\texttt{chosen\_gptsemantic} & A typical horseshoe is crafted from metal, measuring approximately 3 to 3.5 inches... \\
\texttt{chosen\_gptformat} & A reformatted or style-modified variant of the chosen response. \\
\midrule
\multicolumn{2}{@{}l}{\textit{Rejected-response variants}} \\
\texttt{rejected} & Horseshoes are either metal or plastic discs. The horseshoes come in different weights... \\
\texttt{reject\_gptsemantic} & A semantic variant of the rejected response. \\
\texttt{reject\_gptformat} & A format-modified variant of the rejected response. \\
\midrule
\multicolumn{2}{@{}l}{\textit{Irrelevant / off-target responses}} \\
\texttt{irr\_train} & The chicken stock is much tastier if you use a pressure cooker... \\
\texttt{irr\_test} & Great question! This is something that we'll be studying for a long time... \\
\texttt{irr\_hum} & The mysterious cat prowled through the moonlit forest... \\
\texttt{random\_permute} & also at 3 is long horseshoes. also should of inches to horseshoe have... \\
\texttt{random\_nonhum} & long work, I that question! factors the is far, a so at including... \\
\bottomrule
\end{tabular}
\end{table}

\paragraph{Purpose of the evaluation set.}
These variants are not used as training targets. Instead, they allow us
to probe whether preference optimization increases probability on
preferred responses, suppresses rejected responses in a controlled way,
and avoids unnecessary damage to unrelated parts of the distribution.

\section{Mass Dynamic Evaluation}
\label{app:mass_dyn_prot}
\subsection{Mass-Dynamics Evaluation Protocol}

\paragraph{Training data.}
All preference optimization methods are trained on standard pairwise
preference data of the form $(x, y^+, y^-)$, where $x$ is a prompt,
$y^+$ is the preferred response, and $y^-$ is the dispreferred
response. Our main learning-dynamics analysis is conducted on
Anthropic-HH, with additional validation on UltraFeedback.
During training, models optimize only over these preference triplets
and do not use the auxiliary response variants introduced below.
Representative examples are provided in
Appendix~\ref{app:data_samples}.

\paragraph{Evaluation data.}
To analyze how training redistributes probability mass, we construct a
fixed evaluation set following the learning-dynamics setting of
\citep{ren2025learning}. For each prompt, we score multiple response
types under both the SFT reference model $\pi_{\mathrm{ref}}$ and the
trained policy $\pi_\theta$.
These response types fall into three groups:

\begin{itemize}
    \item \textbf{Preferred-response variants:}
    the original chosen response and its semantic or structural
    variants (e.g., \texttt{chosen}, \texttt{chosen\_initial},
    \texttt{chosen\_selfr}, \texttt{chosen\_gptsemantic},
    \texttt{chosen\_gptformat}).

    \item \textbf{Rejected-response variants:}
    the original rejected response and related variants
    (e.g., \texttt{rejected}, \texttt{reject\_gptsemantic},
    \texttt{reject\_gptformat}).

    \item \textbf{Irrelevant / off-target responses:}
    unrelated, corrupted, or off-distribution text
    (e.g., \texttt{irr\_train}, \texttt{irr\_test},
    \texttt{irr\_hum}, \texttt{random\_permute},
    \texttt{random\_nonhum}).
\end{itemize}

These variants are used only for evaluation and are not optimization
targets during preference training. Appendix~\ref{app:data_samples}
shows representative examples.

\paragraph{Metrics.}
For each response type $y$, we compute the conditional log-probability
$\log \pi(y \mid x)$ under both $\pi_{\mathrm{ref}}$ and
$\pi_\theta$. We then report three aggregate quantities:

\begin{align}
\Delta \text{Chosen}
&=
\mathbb{E}_{(x,y)\in\mathcal{Y}_{\text{chosen}}}
\left[
\log \pi_\theta(y\mid x) - \log \pi_{\mathrm{ref}}(y\mid x)
\right], \\
\Delta \text{Others}
&=
\mathbb{E}_{(x,y)\in\mathcal{Y}_{\text{other}}}
\left[
\log \pi_\theta(y\mid x) - \log \pi_{\mathrm{ref}}(y\mid x)
\right], \\
\Delta \text{Margin}
&=
\mathbb{E}_{(x,y^+,y^-)}
\left[
\big(\log \pi_\theta(y^+\mid x)-\log \pi_\theta(y^-\mid x)\big)
-
\big(\log \pi_{\mathrm{ref}}(y^+\mid x)-\log \pi_{\mathrm{ref}}(y^-\mid x)\big)
\right].
\end{align}

Here, $\Delta$ Chosen measures whether training increases probability
assigned to preferred responses; positive values indicate healthier
preference learning dynamics.
$\Delta$ Others measures collateral suppression of unrelated or
off-target responses; values closer to zero indicate less destructive
redistribution.
$\Delta$ Margin measures how strongly training increases preference
separation between chosen and rejected responses.
\newline \textbf{Interpretation.}
This protocol complements task-level evaluation by directly measuring
how preference optimization reshapes the output distribution.
In particular, it distinguishes methods that improve preference
separation through healthy redistribution from those that do so through
destructive probability collapse.

\section{Softmax Pathology in a Minimal Setting}
\label{app:toy_example}

\citep{ren2025learning} showed that the squeezing effect is not specific to language models but arises from the interaction between softmax normalization and negative gradients. In their analysis, even a simple multiclass logistic regression model exhibits the same phenomenon: when a negative gradient is applied to a low-probability class, both the chosen and rejected responses decrease while probability mass shifts toward the argmax class. This observation establishes squeezing as a fundamental property of softmax optimization rather than an artifact of large neural architectures.

We reproduce this minimal experiment and extend it with our gradient gating mechanism. The model is a $K$-class logistic regression with features $\phi(x)\in\mathbb{R}^d$ and linear weights $W\in\mathbb{R}^{d\times K}$ producing

\begin{equation}
p = \mathrm{softmax}(W^\top \phi(x)).
\end{equation}

Standard gradient descent on cross-entropy gives

\begin{equation}
W_{t+1} = W_t - \eta \, \phi(x)(p - e_y)^\top ,
\end{equation}

where negative learning rates ($\eta < 0$) simulate the rejected-response updates used in preference optimization. This setup isolates the effect of softmax normalization without neural architectures, sequence modeling, or language-specific structure.

To prevent pathological updates, we introduce a gated update rule

\begin{equation}
W_{t+1} = W_t - \eta \, g(p(y)) \, \phi(x)(p - e_y)^\top ,
\end{equation}

where

\begin{equation}
g(p) = \sigma(\alpha(\log p - \log \tau)).
\end{equation}

The gate reduces gradient magnitude when the target probability lies in the extreme tail of the distribution. In our experiments we use $\alpha=50$ and $\tau \approx 1/K$ ($\tau=0.03$ for $K=50$, $\tau=10^{-8}$ for $K=1000$). 

We reproduce the five canonical cases from \citep{ren2025learning}: positive gradient updates (A), negative gradients on flat distributions (B), negative gradients on peaks (C), negative gradients on valley responses (D), and a large-vocabulary setting (E). Figure~\ref{fig:toy_example} shows the distributions before the update (gray), after baseline optimization (blue), and after gated updates (green).

The baseline results reproduce the original observation: when a negative gradient targets a valley response, softmax normalization causes extreme squeezing and the update effectively freezes ($\Delta_{\max} \approx 3\times10^{-6}$). With gating, the learning rate is automatically reduced for such cases ($g\approx0.38$), allowing the update to remain effective while preventing collapse. The resulting probability change increases by more than twenty orders of magnitude and preserves entropy in the distribution.
Across all scenarios, the gate activates only when probabilities fall below the threshold, leaving normal optimization unaffected. The same mechanism also scales to larger vocabularies ($K=1000$), confirming that the approach remains effective in settings closer to neural language models.\\
These controlled experiments confirm two key points. First, the squeezing phenomenon identified by \citep{ren2025learning} arises directly from softmax normalization under negative gradients. Second, adaptive gradient gating mitigates this pathology without altering the model architecture or objective. This minimal demonstration motivates the neural network experiments presented in the main paper.

Figure~\ref{fig:toy_example1} shows hyperparameter impact on low-probability token behavior. Strict gating (smaller $\tau$) fully suppresses destructive gradients, preventing any probability change in the tail region.

\begin{figure}[t]
\centering
\includegraphics[width=\textwidth]{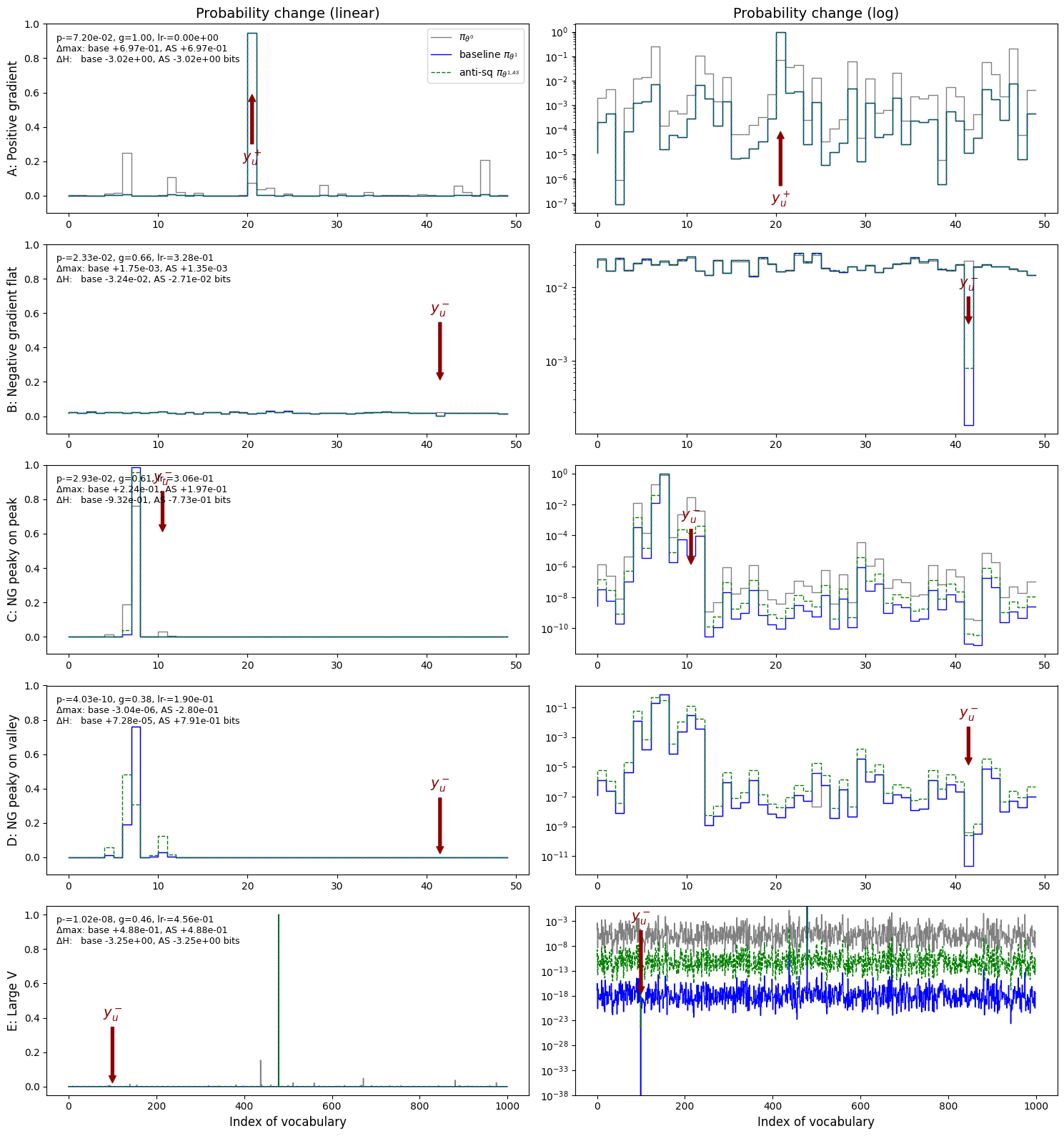}
\caption{
Toy logistic regression experiment illustrating the softmax pathology ($\tau_{\text{smallK}} = 0.03$, $\tau_{\text{largeK}} = 10^{-8}$, $\alpha = 50$). 
Gray: initial distribution; blue: baseline update; green: gated update. 
Negative gradients applied to low-probability classes cause extreme squeezing under standard softmax optimization. 
Gradient gating detects such valley responses and reduces the effective learning rate, restoring stable updates while preserving entropy. 
}
\label{fig:toy_example}
\end{figure}

\begin{figure}[t]
\centering
\includegraphics[width=\textwidth]{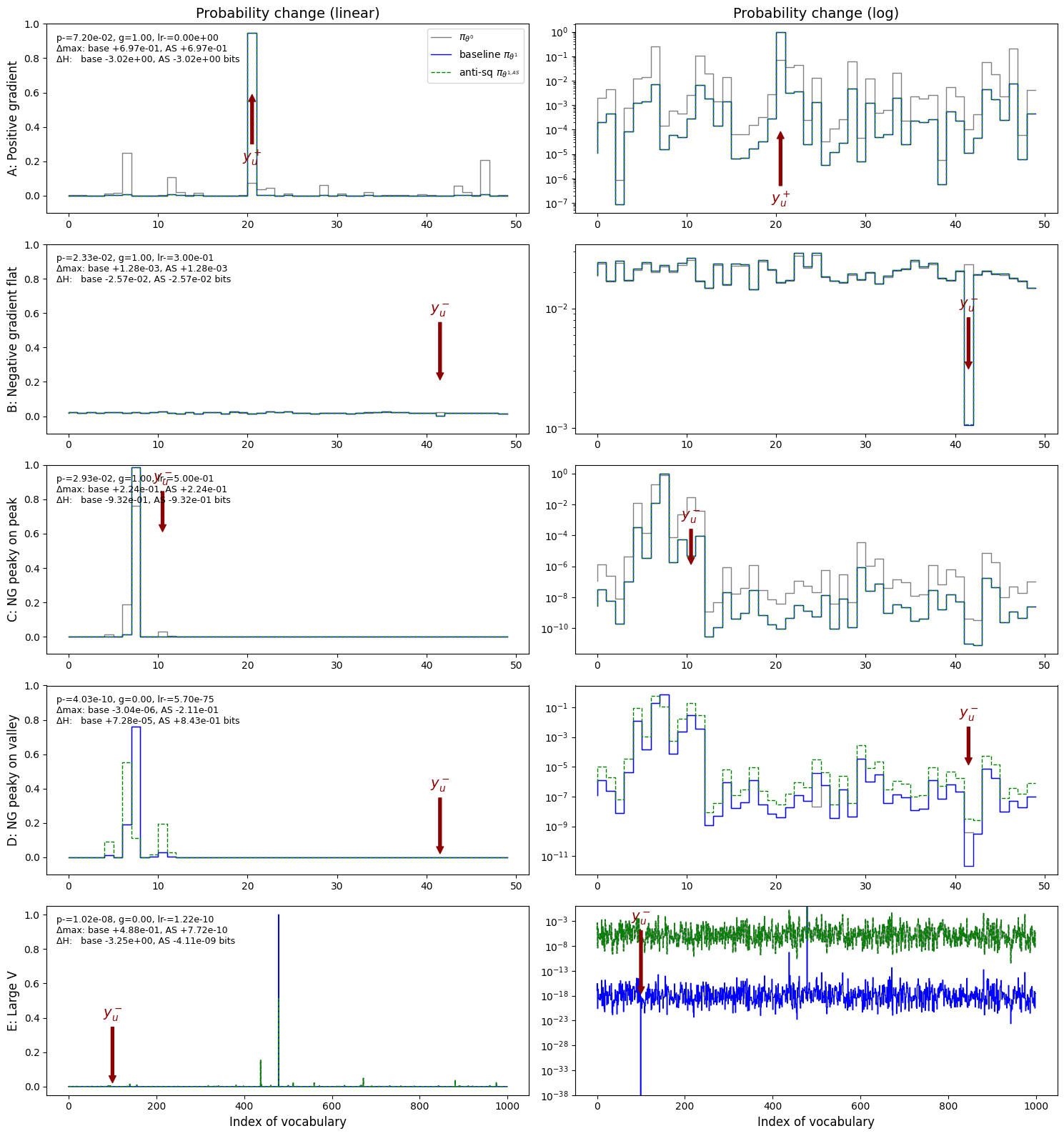}
\caption{
Strict gradient gating ($\tau_{\text{smallK}} = 0.01$, $\tau_{\text{largeK}} = 10^{-6}$, $\alpha = 50$) provides stronger anti-squeeze protection than loose gating Fig.~\ref{fig:toy_example}. Strict gating fully suppresses gradients on low-probability tokens (row D: $g = 0.00$ vs.\ $0.38$; row E: $\Delta_{\max}^{\text{AS}} \approx 0$ vs.\ $+4.88 \times 10^{-1}$), resulting in near-complete preservation of the probability tail (green line in row E stays flat) while maintaining comparable protection for $y^+_u$ (row B). This demonstrates the hyperparameter trade-off: stricter thresholds provide stronger distributional preservation at the cost of more aggressive gradient suppression. 
}
\label{fig:toy_example1}
\end{figure}



\end{document}